\newcommand{\cmark}{\textcolor{green!70!black}{\checkmark}}
\newcommand{\xmark}{\textcolor{red}{\ding{55}}} 
\theoremstyle{plain}
\newtheorem{theorem}{Theorem}[section]
\newtheorem{proposition}[theorem]{Proposition}
\theoremstyle{definition}
\newtheorem{definition}[theorem]{Definition}
\theoremstyle{remark}
\newtheorem{remark}[theorem]{Remark}
\newcommand{\methodname}[1]{\textsf{#1}\xspace}
\begin{document}

%

%

\twocolumn[

\aistatstitle{Adaptive Coverage Policies in Conformal Prediction}

\aistatsauthor{ Etienne Gauthier \And Francis Bach \And  Michael I. Jordan }

\aistatsaddress{ Inria, Ecole Normale Supérieure,\\ PSL Research University \And  Inria, Ecole Normale Supérieure,\\ PSL Research University \And Inria, Ecole Normale Supérieure,\\ PSL Research University,\\ University of California, Berkeley } ]

\begin{abstract}
  Traditional conformal prediction methods construct prediction sets such that the true label falls within the set with a user-specified coverage level. However, poorly chosen coverage levels can result in uninformative predictions, either producing overly conservative sets when the coverage level is too high, or empty sets when it is too low. Moreover, the fixed coverage level cannot adapt to the specific characteristics of each individual example, limiting the flexibility and efficiency of these methods. In this work, we leverage recent advances in e-values and post-hoc conformal inference, which allow the use of data-dependent coverage levels while maintaining valid statistical guarantees. We propose to optimize an adaptive coverage policy by training a neural network using a leave-one-out procedure on the calibration set, allowing the coverage level and the resulting prediction set size to vary with the difficulty of each individual example. We support our approach with theoretical coverage guarantees and demonstrate its practical benefits through a series of experiments.
\end{abstract}

\section{INTRODUCTION}

Conformal prediction \citep{gammerman1998learning,vovk2005algorithmiclearning,shafer2008tutorial} is a powerful framework for quantifying uncertainty that is particularly useful in machine-learning applications \citep{papadopoulos2002inductivecm,balasubramanian2014conformal,laxhammar2015inductive,lei2018inference,chernozhukov2018exact,angelopoulos2021uncertainty,fisch2021few,cella2021valid,johnstone2021conformal,bates2023testing,su2024llm}. It provides prediction sets that contain the true label with high probability, without relying on parametric assumptions about the data distribution. The only requirement is exchangeability, meaning the joint distribution of the data remains invariant under any permutation of the samples. This assumption is significantly weaker than the standard independent and identically distributed (i.i.d.) assumption, making conformal prediction broadly applicable while still offering strong, distribution-free guarantees.

In standard conformal prediction, the available data is typically split into a training set, used to fit a predictive model, and a calibration set, which is held out to assess the model’s behavior on unseen data. Applying the model to the calibration set yields nonconformity scores, which quantify how unusual each true label is relative to the model’s prediction. These scores are then used to construct prediction sets for a new test point that satisfy the desired coverage guarantee. Formally, letting~$\mathcal{X}\times\mathcal{Y}$ denote the feature-label space, we assume that a calibration set $\{(X_i, Y_i)\}_{i=1}^n$ and a new input~$(X_{\rm test},Y_{\rm test})$ have been drawn exchangeably from some distribution $\mathbb{P}$ over $\mathcal{X}\times\mathcal{Y}$. Conformal prediction constructs a prediction set, $\hat{C}_n^\alpha(X_{\rm test}) \subseteq \mathcal{Y}$, based on the calibration set such that
\begin{equation}
\label{cp}
\mathbb{P}(Y_{\text{test}} \in \hat{C}_n^\alpha(X_{\text{test}})) \ge 1 - \alpha,
\end{equation}
where $\alpha \in (0,1)$ is a user-specified miscoverage level. This guarantee is marginal, holding over all sources of randomness including both the calibration samples and the test point.

Machine learning prediction methods  often focus on producing a point prediction for a new input $X$, via a mapping $f \colon \mathcal{X} \to \mathcal{Y}$, without indicating uncertainty in the prediction. Conformal prediction addresses this limitation by providing a principled, model-agnostic approach to uncertainty quantification: it acts as a wrapper around $f$, transforming its outputs into a prediction set that contains the true label with high probability. To construct such sets, conformal prediction uses a score function, $S \colon \mathcal{X} \times \mathcal{Y} \to \mathbb{R}$, that depends on the model~$f$, and quantifies how well a candidate label matches the model’s prediction. The standard approach in conformal prediction relies on the observation that the calibration scores $S(X_i, Y_i)$ for $i = 1, \ldots, n$ and the test score $S(X_{\rm test}, Y_{\rm test})$ are exchangeable. As a result, their ranks follow a uniform distribution. In particular, the test score is unlikely to rank among the highest, which is the key insight used to construct valid prediction sets in conformal prediction. For a comprehensive treatment of conformal prediction and its applications in machine learning, we refer the reader to \cite{angelopoulos2023gentle} and \cite{angelopoulos2024theoreticalfoundationsconformalprediction}.

A core feature of classical conformal prediction is the specification of a fixed coverage level $1-\alpha$ prior to observing the data. While this provides a highly desirable and easily interpretable statistical guarantee, it imposes a rigid operational constraint. In practice, this fixed choice can lead to uninformative predictions: small values of $\alpha$ can yield overly conservative prediction sets, while large values may lead to empty sets. Moreover, classical conformal methods do not allow $\alpha$ to be chosen based on the observed data, because the marginal coverage guarantee (\ref{cp}) only holds when $\alpha$ is fixed before seeing the calibration or test points. In many operational settings, a practitioner might prefer to adjust $\alpha$ after inspecting the data. For instance, in a medical diagnosis task, a standard conformal method guaranteeing 99\% coverage might output a broad set of seven diagnoses. While statistically valid, this set may be clinically unactionable. A physician might prefer to dynamically relax the coverage slightly (e.g., from $\alpha=1\%$ to $\alpha=2\%$) to yield a precise, actionable set of three diagnoses. Most practitioners would naturally opt for the latter, as it provides a more informative and usable prediction. Unfortunately, in traditional conformal prediction, choosing $\alpha$ after inspecting the data undermines the marginal coverage guarantee (\ref{cp}) and is closely related to the phenomenon of p-hacking in the statistical literature \citep{simmons2011false,head2015phacking}. This rigidity is limiting in settings where uncertainty varies across examples, or where we may want to tailor coverage to the difficulty of each instance.

The method proposed by \cite{cherian2024llmvalidity} uses a neural network to fit a data-dependent miscoverage~$\tilde{\alpha}$ in conformal prediction. However, their network is trained on separate training data and offers no guidance on how to optimize training to achieve a desired expected prediction set size. By contrast, our approach trains $\tilde{\alpha}$ directly on calibration data, without needing to reserve any training data for this purpose, and allows practitioners to adjust the training procedure to control the expected size of prediction sets at test time.

We tackle this challenge by leveraging recent advances in e-values \citep{shafer2019gametheoretic,vovk2021evalues,grunwald2024safetesting,ramdas2024hypothesistestingevalues} and post-hoc inference \citep{wang2022fdr, xu2024postselectioninference,grunwald2024beyond,koning2024posthocalphahypothesistesting,gauthier2025evaluesexpandscopeconformal,chugg2025admissibility}, which provide valid coverage guarantees even when the miscoverage level~$\alpha$ is selected adaptively. It is known that conformal sets constructed using e-values generally yield slightly larger prediction sets than those built with standard p-values for a given fixed $\alpha$ \citep{Vovk2025conformaleprediction}. However, e-values are necessary to enable this adaptivity as they are the only way to construct post-hoc p-values, the statistical objects that allow for valid inference even when the miscoverage level is chosen after observing the data \citep{koning2024posthocalphahypothesistesting}. We willingly trade a slight loss in \mbox{fixed-$\alpha$} efficiency for this powerful operational advantage, as it allows the coverage level to be selected adaptively without invalidating the statistical guarantees.

Building on this, we propose to optimize an adaptive coverage policy by training a neural network on observed data. To construct training examples, we adopt a leave-one-out approach on the calibration set: for each held-out point, the remaining points serve as a pseudo calibration set, and the held-out point acts as a pseudo test sample. This generates a collection of labeled examples, enabling a model to predict a sample-specific coverage policy. Our aim is to design a mapping from each pseudo calibration-test pair to a coverage policy that maximizes the \emph{informativeness} of the resulting prediction sets. We formalize this informativeness as a dual objective (detailed in Section \ref{sub:informativeness}): minimizing the prediction set size while simultaneously minimizing the adaptive miscoverage level $\tilde{\alpha}$. This trade-off is governed by a user-specified regularization parameter $\lambda$. Furthermore, in Section \ref{sub:lambda}, we introduce a principled procedure to select $\lambda$ such that the expected prediction set size at test time meets a desired target. Crucially, this approach allows the data-dependent miscoverage level $\tilde{\alpha}$ to adapt to individual test samples while maintaining valid marginal guarantees, offering a more flexible and data-driven alternative to choosing a fixed $\alpha$ in conventional conformal prediction.

By leveraging the post-hoc validity of e-values, this approach enables adaptive conformal prediction, modulating prediction set size according to instance difficulty. Our work is related to recent efforts that combine e-values with conformal prediction \citep{Vovk2025conformaleprediction, gauthier2025evaluesexpandscopeconformal}. In particular, we contrast our approach with recent prior work \citep{gauthier2025backwardconformalprediction,liu2026stbcptighteningcoveragebound}, which also utilizes e-values and a leave-one-out strategy. However, our objective differs fundamentally: while that method inverts the conformal procedure to find the minimum $\alpha$ satisfying a \emph{hard} size constraint for a specific prediction set, our method learns a parametric coverage policy to optimize a \emph{soft} constraint by targeting an expected prediction set size at test time. Furthermore, the two methods employ the leave-one-out procedure for entirely different purposes. Whereas the prior work uses it solely to estimate coverage guarantees, we utilize it as an integral generative step to construct the training data required to fit our neural coverage policy.

\section{METHOD}

Traditional conformal prediction methods compare the rank of the test score to those in the calibration set, a procedure that can be interpreted in terms of p-values. As we will discuss, an alternative is to base conformal inference on e-values. This approach has a wider range of applicability and in particular will permit us to obtain valid inference even when the miscoverage level~$\alpha$ is selected in a data-dependent manner.

\subsection{Conformal e-prediction}

Conformal sets can be constructed using e-values, a method known as conformal e-prediction. E-values are simply the realizations of random variables known as e-variables:

\begin{definition}[E-variable]
    An \emph{e-variable} E is a nonnegative random variable that satisfies 
    \[
    \mathbb{E}[E] \le 1.
    \]
\end{definition}

Thresholding an e-variable at level $1/\alpha$ yields a prediction set with marginal coverage at least $1 - \alpha$. Indeed, we can apply Markov's inequality to obtain:

\[
\mathbb{P}(E < 1/\alpha) \ge 1 - \alpha.
\]

While conformal e-prediction is compatible with any valid e-variable, for concreteness we employ the \emph{soft-rank e-variable}. This construction, which first appeared in \citet{wang2022fdr} and \citet{koning2025measuring}, and was later applied in the context of conformal prediction by \citet{balinsky2024EnhancingCP}, takes the following form:
\begin{equation}
\label{def_e_var}
E = \frac{S(X_{\rm test},Y_{\rm test})}{\frac{1}{n+1}\left(\sum_{i=1}^{n}S(X_i,Y_i)+S(X_{\rm test},Y_{\rm test})\right)}.
\end{equation}

This quantity defines a valid e-value as long as the scores are exchangeable and non-negative. Note that it is meaningful only when the score function $S$ is negatively oriented; that is, lower scores indicate better predictions. In the remainder of this paper, we assume that these conditions hold. 

Intuitively, the soft-rank e-variable construction mirrors the logic of traditional conformal prediction: the test score cannot be disproportionately large relative to the average calibration scores. However, unlike traditional conformal prediction that uses rank-based comparisons, the soft-rank e-variable directly compares the actual score values.

Using e-values goes beyond simply applying Markov’s inequality to obtain a valid conformal set with a fixed miscoverage level $\alpha$. They possess stronger properties, such as post-hoc guarantees, which allow for coverage guarantees even when the significance level $\alpha$ is chosen based on the observed data, something standard conformal prediction methods cannot provide.

\subsection{Post-hoc Validity}

We recall the key result on post-hoc validity with e-variables in conformal prediction, stated here for the specific case of the soft-rank e-variable.

\begin{proposition}[\citet{gauthier2025evaluesexpandscopeconformal}]
\label{prop:post-hoc-cp}
    Consider a calibration set $\{(X_i,Y_i)\}_{i=1}^n$ and a test data point $(X_{\rm test},Y_{\rm test})$ such that $(X_1,Y_1),\dotsc,(X_n,Y_n),(X_{\rm test},Y_{\rm test})$ are exchangeable. Let $\tilde{\alpha} > 0$ be any miscoverage level may depend on all of these data points. Then we have that:
\begin{equation}
\label{eq:posthoc}
    \mathbb{E}\bigg[ \frac{\mathbb{P} (Y_{\rm test} \not\in \hat{C}_n^{\tilde{\alpha}}(X_{\rm test})\mid \tilde{\alpha})}{\tilde{\alpha}} \bigg] \le 1,    
\end{equation}
    where 
\begin{equation}
\label{conformal_set}
    \hat{C}_n^{\tilde{\alpha}}(x)\!:=\! \left\{y\!:\!\frac{S(x,y)}{\frac{1}{n+1}\left(\sum_{i=1}^{n}S(X_i,Y_i) + S(x,y) \right)}\!<\!\frac{1}{\tilde{\alpha}} \right\}.
\end{equation}
When $\tilde{\alpha}$ is a fixed constant independent of the data, the guarantee (\ref{eq:posthoc}) reduces to the standard conformal guarantee (\ref{cp}).
\end{proposition}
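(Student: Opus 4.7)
My strategy is to recognize the miscoverage event as an e-value threshold crossing and then reduce the claim to the defining property $\mathbb{E}[E]\le 1$ of the soft-rank e-variable $E$ from~(\ref{def_e_var}).

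First I would unfold the definition in~(\ref{conformal_set}): specializing $(x,y)=(X_{\rm test},Y_{\rm test})$ turns the defining inequality of $\hat C_n^{\tilde\alpha}(X_{\rm test})$ into exactly $E<1/\tilde\alpha$. Hence $\{Y_{\rm test}\notin \hat C_n^{\tilde\alpha}(X_{\rm test})\} = \{E\ge 1/\tilde\alpha\}$. Since $\tilde\alpha$ is $\sigma(\tilde\alpha)$-measurable, I can pull $1/\tilde\alpha$ inside the conditional expectation and then apply the tower property to obtain
\[
\mathbb{E}\!\left[\frac{\mathbb{P}(Y_{\rm test}\notin \hat C_n^{\tilde\alpha}(X_{\rm test})\mid \tilde\alpha)}{\tilde\alpha}\right]
= \mathbb{E}\!\left[\frac{\mathbf{1}\{E\ge 1/\tilde\alpha\}}{\tilde\alpha}\right].
\]
Pointwise on the event $\{E\ge 1/\tilde\alpha\}$ one has $1/\tilde\alpha \le E$, so $\mathbf{1}\{E\ge 1/\tilde\alpha\}/\tilde\alpha \le E$ everywhere. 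Thus the right side above is bounded by $\mathbb{E}[E]$, and it remains to show $\mathbb{E}[E]\le 1$.

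This last inequality is the only substantive calculation and, I expect, the main (though mild) obstacle. It is a short symmetry argument: writing $\bar S = \frac{1}{n+1}\sum_{i=1}^{n+1} S_i$ with $S_i = S(X_i,Y_i)$ and $S_{n+1}=S(X_{\rm test},Y_{\rm test})$, the exchangeability of the $(X_i,Y_i)$ (including the test pair) together with the nonnegativity of $S$ makes the $n+1$ ratios $S_j/\bar S$ (with the convention $0/0:=0$) identically distributed, and by construction they sum to $n+1$ whenever $\bar S>0$. Symmetry then yields $\mathbb{E}[E]=\mathbb{E}[S_{n+1}/\bar S]\le 1$, which closes the bound.

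Finally, when $\tilde\alpha\equiv\alpha$ is a fixed constant the outer expectation and the conditioning collapse, and the inequality reduces to $\mathbb{P}(Y_{\rm test}\notin \hat C_n^\alpha(X_{\rm test}))\le \alpha$, which is exactly the marginal coverage guarantee~(\ref{cp}), confirming the final sentence of the proposition.
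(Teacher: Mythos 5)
Your proof is correct. The paper does not include its own proof of this proposition---it is cited from \citet{gauthier2025evaluesexpandscopeconformal}---so there is nothing in the present text to compare against line by line, but your argument is exactly the standard post-hoc e-value reasoning that the paper's preceding discussion (Markov's inequality applied to $E$) gestures at. The three key steps are all sound: (a) identifying the miscoverage event with $\{E\ge 1/\tilde\alpha\}$ directly from the definition of $\hat C_n^{\tilde\alpha}$; (b) using $\sigma(\tilde\alpha)$-measurability and the tower property to rewrite the target as $\mathbb{E}[\mathbf{1}\{E\ge 1/\tilde\alpha\}/\tilde\alpha]$, then bounding the integrand pointwise by $E$ (this is where adaptivity in $\tilde\alpha$ is absorbed); and (c) the symmetry argument for $\mathbb{E}[E]\le 1$, using exchangeability of the $n+1$ scores and the identity $\sum_{j=1}^{n+1} S_j/\bar S = n+1$ on $\{\bar S>0\}$, with the $0/0:=0$ convention correctly giving an inequality rather than equality. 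The reduction to (\ref{cp}) for constant $\tilde\alpha$ is immediate.
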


Proposition \ref{prop:post-hoc-cp} enables us to obtain marginal guarantees for any coverage level, including those that depend on the data. This post-hoc validity holds because the definition of the e-value relies strictly on the non-conformity scores, which are exchangeable by assumption. Because an adaptive policy does not alter the computation of these underlying scores, selecting the miscoverage level $\tilde{\alpha}$ in a data-dependent manner does not retroactively compromise exchangeability. Building on this flexibility, we aim to design a coverage policy that adapts the miscoverage level in order to minimize the size of the resulting prediction sets. Informally, a coverage policy is simply a rule that maps the calibration scores and potential test scores to a (data-dependent) miscoverage level $\tilde\alpha \in (0,1)$. We will provide a formal definition in Definition~\ref{def:coverage-policy}, but before doing so, we first examine how prediction set sizes behave in both the classification and regression settings. This detour is useful since we want to design coverage policies that minimize these sizes, and the form they take in these two cases will guide us toward a simplified definition of coverage policies.

\begin{remark}[Conformal set size in classification]
\label{rk:classif}
   Given a calibration set $(X_i,Y_i)_{i=1}^n$ and a test feature $X_{\rm test}$, in classification problems the size of a conformal set~$\rm Size \big(\hat{C}_n^{\tilde{\alpha}}(X_{\text{test}})\big)$ at miscoverage level $\tilde{\alpha} \in (0,1)$ is  given by
\[
\#\left\{ y \in \mathcal{Y} : \frac{S(X_{\rm test},y)}{\tfrac{1}{n+1}\!\left(\sum_{i=1}^nS(X_i,Y_i)+S(X_{\rm test},y)\right)}\! <\! \frac{1}{\tilde{\alpha}}\right\}.
\]
Because the cardinality operator is non-differentiable, it cannot be used directly in gradient-based optimization methods typical in machine learning. Since our ultimate goal is to train coverage policies via gradient-based methods, we replace the indicator inside the cardinality with a smooth surrogate. Specifically, we leverage the approximation:
\begin{align}
\label{eq:smooth_sigmoid}
&\sum_{y \in \mathcal{Y}} \mathbb{1}{\left\{\frac{S(X_{\rm test},y)}{\frac{1}{n+1}\left(\sum_{i=1}^n S(X_i,Y_i)+S(X_{\rm test},y)\right)} < \frac{1}{\tilde{\alpha}} \right\}} \notag \\
&\!\approx\! \sum_{y \in \mathcal{Y}} \!\sigma \! \left(\! k \! \left( \frac{1}{\tilde{\alpha}}\! - \!\frac{S(X_{\rm test},y)}{\tfrac{1}{n+1}\left(\sum_{i=1}^n S(X_i,Y_i)+S(X_{\rm test},y) \right)}\! \right)\! \right),
\end{align}
where $\sigma(x) = 1/(1 + e^{-x})$ is the sigmoid function, and~$k > 0$ is a parameter controlling the sharpness of the approximation. As $k \to \infty$, the sigmoid approaches a step function, and the approximation becomes exact. This smooth approximation enables efficient end-to-end training using standard neural network toolkits. It is worth noting, however, that alternative approaches to sigmoid smoothing, such as randomized smoothing proposed by \cite{berthet2020perturbed}, may lead to more robust optimization.
\end{remark}

\begin{remark}[Conformal set size in regression]
The conformal set built using the soft-rank e-variable can be rewritten as:
\begin{align*}
\hat{C}_{n}^{\tilde{\alpha}}(X_{\rm test}) 
&= \left\{y \in \mathcal{Y} : S(X_{\rm test},y) < \frac{\sum_{i=1}^nS(X_i,Y_i)}{(n+1)\tilde{\alpha}-1} \right\},
\end{align*}
using basic algebraic simplifications. Now, consider a standard choice of score in regression, namely the mean absolute error (MAE):
\[
S(x,y) = |f(x)-y|.
\]
In this case, one can directly express the size of the conformal set,
\begin{equation}
\label{eq:size_reg}
\rm Size \big( \hat{C}_{n}^{\tilde{\alpha}}(X_{\rm test}) \big) = 2 \frac{\sum_{i=1}^nS(X_i,Y_i)}{(n+1)\tilde{\alpha}-1},
\end{equation}
provided that $\tilde{\alpha} > 1/(n+1)$. This expression is differentiable almost everywhere with respect to $\tilde\alpha$, enabling gradient-based optimization.
\label{rmk:sets_reg}
\end{remark}

We observe that the prediction sets considered here, based on the soft-rank e-value conformal set~(\ref{conformal_set}), depend on the calibration scores in a very simple way: they depend only on the sum of all calibration scores. In addition, in classification, the size also depends on the vector of potential test scores $(S(X_{\rm test},y))_{y \in \mathcal{Y}}$, while in regression the size does not depend on the test feature at all.  

This observation motivates a natural simplification: we can define a coverage policy directly as a function of the sum of the calibration scores and an appropriate test summary statistic (which may be empty in the regression setting) as follows.

\begin{definition}[Coverage policy]
\label{def:coverage-policy}
A \emph{coverage policy} is a function
\[
\pi: \mathbb{R}_+ \times \mathcal{T} \to (0,1),
\]
that maps the sum of calibration scores and a test summary statistic to a miscoverage level.\footnote{While we refer to it as a coverage policy, the function actually outputs a miscoverage level $\tilde{\alpha}$. This convention simplifies our notation, since the conformal set is defined directly using the inverse $1/\tilde{\alpha}$.} Here, $\mathcal{T}$ denotes the space of possible test statistics $t(X_{\rm test})$: in classification, $t(X_{\rm test})$ is the vector of potential scores~$(S(X_{\rm test},y))_{y\in\mathcal{Y}}$, whereas in regression it is empty, reflecting that the conformal set size (\ref{eq:size_reg}) does not depend on the test feature. The output 
\[
\pi\big(\textstyle\sum_{i=1}^n S(X_i,Y_i), t(X_{\rm test})\big)
\]
specifies the miscoverage level $\tilde{\alpha}$ to be used for constructing the conformal set $\hat{C}_{n}^{\tilde{\alpha}}(X_{\rm test})$.
\end{definition}

Definition \ref{def:coverage-policy} formalizes the idea that the miscoverage level can adapt based on summary information from the calibration set and the new test point, while remaining compatible with the guarantees of Proposition~\ref{prop:post-hoc-cp}. Our goal is to design a coverage policy that selects the miscoverage level to produce prediction sets that are as informative as possible. 

\subsection{Training a Coverage Policy via a Leave-One-Out Procedure}
\label{sub:informativeness}

For concreteness, we model the coverage policy using a neural network
\[
\tilde{\alpha}_\theta: \mathbb{R}_+ \times \mathcal{T} \to (0,1),
\]
parameterized by $\theta \in \Theta$ (though any model could be used). The goal of training is to select parameters $\theta$ that produce informative prediction sets while maintaining appropriate coverage.

To generate training samples, we employ a leave-one-out procedure: for a given calibration set of size $n$ and~$j \in \{1,\dots,n\}$, we treat the $j$-th sample as a pseudo test point and the remaining $n-1$ samples as a pseudo calibration set. Repeating this for all $n$ points in the set produces $n$ pseudo calibration-test pairs, each of which serves as a labeled example for training. The intuition behind this procedure is that, when leaving out a single data point, the aggregated information from the remaining~$n-1$ points changes only slightly. As a result, each pseudo calibration-test pair provides nearly the same perspective as if we were observing a fresh sample with a fresh calibration set together with a new test point. This allows the network to infer a meaningful mapping from calibration sets to coverage levels without needing multiple independent sets.

For the $j$-th training example of the leave-one-out procedure, we denote the network’s predicted miscoverage~by
\begin{equation}
\label{alpha_tilde_j}
\tilde{\alpha}_\theta^j := \tilde{\alpha}_\theta\big(\textstyle\sum_{i\neq j} S(X_i,Y_i), t(X_j)\big),
\end{equation}
and for the actual test point we write
\begin{equation}
\label{alpha_tilde_test}
\tilde{\alpha}_\theta^{\rm test} := \tilde{\alpha}_\theta\big(\textstyle\sum_{i=1}^n S(X_i,Y_i), t(X_{\rm test})\big).
\end{equation}
Note that at training time the network is fed sums of~$n-1$ scores, while at test time it receives a sum over $n$ scores. The difference is negligible: leaving out one observation only slightly perturbs the aggregate, so the pseudo examples are essentially indistinguishable from the true test-time scenario.

We aim to tune $\theta$ so as to minimize the size of the prediction sets produced across the $n$ training samples. However, naively minimizing prediction set size leads to a degenerate solution with coverage equal to zero. To avoid this, we introduce a regularization term that penalizes overly large miscoverage, thereby stabilizing the training process and discouraging degenerate solutions. The strength of this penalty is controlled by a user-specified parameter $\lambda > 0$, which allows practitioners to balance between the compactness of the prediction sets and the conservativeness of the coverage level. A smaller $\lambda$ places more weight on obtaining smaller prediction sets, while a larger $\lambda$ emphasizes achieving better coverage.

Specifically, we train $\tilde{\alpha}_\theta$ by minimizing the following objective:
\begin{equation}
\label{loss}
\mathcal{L}_\lambda(\theta) = \frac{1}{n} \sum_{j=1}^n \rm Size \big( \hat{C}_{n-1}^{\tilde{\alpha}_\theta^j}(X_j) \big) + \lambda \cdot \tilde{\alpha}_\theta^j,
\end{equation}
where $\hat{C}_{n-1}^{\tilde{\alpha}_\theta^j}(X_j)$ is the prediction set built from the pseudo calibration set obtained by leaving out the $j$‑th sample, applied to the pseudo test feature $X_j$, with miscoverage level $\tilde{\alpha}_\theta^j$ predicted by the network. The first term encourages informative prediction sets, while the second term discourages the network from selecting excessively high miscoverage levels.

\begin{algorithm}[ht]
\caption{Training a Coverage Policy via Leave-One-Out}
\label{algorithm}
\KwIn{Calibration set $\{(X_1, Y_1), \dots, (X_n, Y_n)\}$, score function $S$, untrained neural network $\tilde{\alpha}_\theta$, regularization parameter $\lambda$, batch size~$B$, optimizer}
\KwOut{Trained neural network $\tilde{\alpha}_\theta$}
\BlankLine
\textbf{Step 1: Construct pseudo episodes via leave-one-out}  

\For{$j = 1, \dots, n$}{
    Define pseudo test point $X_j$ and pseudo calibration set $\{(X_i, Y_i) : i \neq j\}$
}

\BlankLine
\textbf{Step 2: Update network parameters}  

Initialize $\theta$ randomly\;

\While{not converged}{
    Sample a minibatch $\mathcal{B} \subset \{1, \dots, n\}$ of size $B$\;
    
    \ForEach{$j \in \mathcal{B}$}{
        Compute coverage level $\tilde{\alpha}_\theta^j$ using (\ref{alpha_tilde_j});
        
        Construct conformal set $\hat{C}_{n-1}^{\tilde{\alpha}_\theta^j}(X_j)$ defined in (\ref{conformal_set}) using score function $S$\;
        
        Compute $\rm Size\big(\hat{C}_{n-1}^{\tilde{\alpha}_\theta^j}(X_j)\big)$\;
    }
    
    Compute loss defined in (\ref{loss}):
    \[
    \mathcal{L}_\lambda(\theta) = \frac{1}{B} \sum_{j \in \mathcal{B}} \rm Size\big(\hat{C}_{n-1}^{\tilde{\alpha}_\theta^j}(X_j)\big) + \lambda \cdot \tilde{\alpha}_\theta^j
    \]
    
    Update $\theta$ by minimizing $\mathcal{L}_\lambda(\theta)$ using the optimizer\;
}

\BlankLine
\Return $\tilde{\alpha}_\theta$
\end{algorithm}

By minimizing the loss (\ref{loss}) in the leave-one-out protocol, we fit a coverage policy that adaptively selects a miscoverage level $\tilde\alpha$ at test time. We summarize the training procedure of $\tilde\alpha_\theta$ in Algorithm~\ref{algorithm}. Using the neural network output by Algorithm \ref{algorithm}, we can compute the test-time miscoverage $\tilde{\alpha}_\theta^{\rm test}$ as defined in (\ref{alpha_tilde_test}). The associated conformal set $\hat{C}_{n}^{\tilde{\alpha}_\theta^{\rm test}}\!(X_{\rm test})$ is optimized for informative prediction sets while still satisfying the marginal coverage guarantee (\ref{eq:posthoc}), ensuring both practical efficiency and rigorous statistical reliability.

\subsection{Selecting $\lambda$: Insights From the Constant-$\alpha$ Setting}
\label{sub:lambda}

The choice of the regularization term $\lambda$ is critical in practice, as it directly impacts, among other things, the expected size of the prediction set at test time. One practical strategy to select $\lambda$ is to monitor the behavior of the network $\tilde\alpha_\theta$ during training and track the final average sizes of the training conformal sets. 

To build intuition for this strategy, we first analyze an idealized setting in which the network output is constant, i.e., $\tilde\alpha_\theta \equiv \alpha$. In this case, we can show that the average size under the leave-one-out protocol provides an accurate estimate of the expected test-time prediction set size. More precisely, the estimation error is of order $O_P\left(1/\sqrt{n}\right)$ as the calibration size $n$ increases (using the $O_P$ notation from \citet{vaart1998asymptotics}).

\begin{theorem}[Leave-one-out proxy under constant~$\alpha$]
\label{thm:loo-size-consistency}
Assume that the calibration samples $(X_i,Y_i)$ are i.i.d., and let $\alpha \in (0,1)$ be a given target miscoverage level.  

Assume one of the following two cases holds:
\begin{enumerate}[label=(\roman*)]
\item \textbf{(Classification, sigmoid smoothing)}  
The size is given by the smooth sigmoid approximation defined in (\ref{eq:smooth_sigmoid}) with some parameter $k>0$.  
The score function $S$ is bounded and takes values in $[S_{\min},S_{\max}]$ with $0<S_{\min}\le S_{\max}<\infty$, and $n > S_{\max}/S_{\min}$.  

\item \textbf{(Regression, MAE score)}  
The size is defined in (\ref{eq:size_reg}).  
The score function $S$ is bounded and takes values in $[0,S_{\max}]$ with $S_{\max}<\infty$, and the miscoverage level satisfies $\alpha > 1/n$.  
\end{enumerate}

Let $\overline{\rm Size}_n := \frac{1}{n}\sum_{j=1}^n \rm Size(\hat{C}_{n-1}^{\alpha}(X_j))$ denote the average size under the leave-one-out protocol. Then, under either (i) or (ii),  
\[
\Big|\overline{\rm Size}_n - \mathbb{E}\left[\rm Size\big(\hat{C}_n^{\alpha}(X_{\rm test})\big)\right]\Big| = O_P\!\left(\frac{1}{\sqrt{n}}\right),
\]
i.e., the average size consistently estimates the expected test-time size at rate \(1/\sqrt{n}\) in probability.
\end{theorem}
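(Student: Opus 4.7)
The plan is to handle the two cases of the theorem separately and, in each, decompose the target difference via the triangle inequality into a deterministic gap (coming from using $n-1$ versus $n$ calibration scores) and a stochastic fluctuation controlled by Hoeffding-type concentration for bounded i.i.d.\ variables.

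\textbf{Regression case.} The key observation is that the closed-form expression (\ref{eq:size_reg}) telescopes under the leave-one-out average. Writing $T_{-j} := \sum_{i\neq j} S(X_i,Y_i)$, $T_n := \sum_{i=1}^n S(X_i,Y_i)$ and $\mu_S := \mathbb{E}[S(X,Y)]$, a direct computation yields
\[
\overline{\rm Size}_n \;=\; \frac{1}{n}\sum_{j=1}^n \frac{2\,T_{-j}}{n\alpha-1} \;=\; \frac{2(n-1)\,T_n}{n(n\alpha-1)},
\]
so that $\mathbb{E}[\overline{\rm Size}_n] = 2(n-1)\mu_S/(n\alpha-1)$, while $\mathbb{E}[\rm Size(\hat{C}_n^{\alpha}(X_{\rm test}))] = 2n\mu_S/((n+1)\alpha-1)$. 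The gap between these two deterministic quantities is an easy algebraic check to be $O(1/n)$, and Hoeffding's inequality applied to the bounded i.i.d.\ scores $S(X_i,Y_i)$ gives $|\overline{\rm Size}_n - \mathbb{E}[\overline{\rm Size}_n]| = O_P(1/\sqrt{n})$. The assumption $\alpha > 1/n$ keeps all denominators strictly positive.

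\textbf{Classification case.} I would introduce the large-$n$ proxy
\[
\Phi(x) \;:=\; \sum_{y\in\mathcal{Y}} \sigma\!\left(k\!\left(\frac{1}{\alpha} - \frac{S(x,y)}{\mu_S}\right)\right),
\]
and sandwich both $\overline{\rm Size}_n$ and $\mathbb{E}[\rm Size(\hat{C}_n^{\alpha}(X_{\rm test}))]$ around $\mathbb{E}[\Phi(X)]$ at rate $O_P(1/\sqrt{n})$. The technical core is the uniform (in $j$ and $y$) bound
\[
\left|\frac{n\,S(X_j,y)}{T_{-j}+S(X_j,y)} - \frac{S(X_j,y)}{\mu_S}\right| \;\le\; \frac{S_{\max}\,|n\mu_S - T_{-j} - S(X_j,y)|}{\mu_S\,(T_{-j}+S(X_j,y))}.
\]
The denominator is bounded below by $\mu_S \cdot n S_{\min}$ (the hypotheses $S_{\min}>0$ and $n>S_{\max}/S_{\min}$ keep it uniformly bounded away from zero), while the numerator is $O_P(\sqrt{n})$: Hoeffding on the $n-1$ bounded i.i.d.\ summands of $T_{-j}$ yields $|T_{-j}-(n-1)\mu_S|=O_P(\sqrt{n})$, and the supremum over $j\in\{1,\dots,n\}$ only costs an additional $S_{\max}$ via $T_{-j}=T_n-S(X_j,Y_j)$. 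Combined with the $k/4$-Lipschitz constant of $\sigma$ and the finiteness of $|\mathcal{Y}|$, this gives $\max_j |\rm Size(\hat{C}_{n-1}^\alpha(X_j)) - \Phi(X_j)| = O_P(1/\sqrt{n})$, hence $|\overline{\rm Size}_n - \tfrac{1}{n}\sum_j \Phi(X_j)| = O_P(1/\sqrt{n})$. Since the $\Phi(X_j)$ are i.i.d.\ and bounded by $|\mathcal{Y}|$, ordinary i.i.d.\ concentration delivers $|\tfrac{1}{n}\sum_j \Phi(X_j) - \mathbb{E}[\Phi(X)]| = O_P(1/\sqrt{n})$. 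An analogous pointwise Lipschitz argument at test time (with $T_n$ playing the role of $T_{-j}$), together with the sub-Gaussian tail of $T_n-n\mu_S$ from Hoeffding to upgrade pointwise convergence to convergence in expectation, gives $|\mathbb{E}[\rm Size(\hat{C}_n^{\alpha}(X_{\rm test}))] - \mathbb{E}[\Phi(X)]| = O(1/\sqrt{n})$, and one last triangle inequality closes the argument.

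\textbf{Main obstacle.} The delicate part is the classification case: because the same calibration scores appear in every leave-one-out summand, the terms $\rm Size(\hat{C}_{n-1}^\alpha(X_j))$ are not independent, so no direct CLT applies to $\overline{\rm Size}_n$. The reduction to the i.i.d.\ proxy $\tfrac{1}{n}\sum_j \Phi(X_j)$ bypasses this dependence at cost $O_P(1/\sqrt{n})$, but it relies crucially on the nonlinear denominator $T_{-j}+S(X_j,y)$ being bounded away from zero uniformly in $j$ and $y$, which is exactly where the boundedness hypotheses $S_{\min}>0$ and $n>S_{\max}/S_{\min}$ become essential.
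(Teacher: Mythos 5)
Your proposal is correct and follows essentially the same route as the paper: in the regression case both decompose the error into a deterministic $O(1/n)$ bias plus a Hoeffding-controlled $O_P(1/\sqrt n)$ fluctuation of the empirical mean score, and in the classification case both introduce the population proxy (scores divided by $\mu_S$), exploit the $k/4$-Lipschitz sigmoid to reduce the size comparison to an $\ell_\infty$ bound on the e-value vectors, and then apply Hoeffding. The only cosmetic difference is that the paper cites a lemma from prior work for the quantitative $\ell_\infty$ bounds whereas you re-derive them, but the argument is the same.
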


Theorem~\ref{thm:loo-size-consistency}, proved in Appendix~\ref{appendix:proof}, shows that in the idealized constant-output case, the average size of the conformal sets provides a reliable estimate of the expected size at test time. This result provides a theoretical foundation for our approach: monitoring the leave-one-out sizes during training to approximate the expected test-time behavior of the conformal predictor.

In practice, the network output is generally a trained, input-dependent function rather than a constant. While Theorem \ref{thm:loo-size-consistency} does not directly extend to this more realistic case, we observe empirically that the same approximation remains accurate: when the network varies smoothly with the input distribution, the leave-one-out average continues to track the expected test-time size. Our experiments in the next section substantiate this observation.\footnote{Our code is publicly available at \url{https://github.com/GauthierE/adaptive-coverage-policies}.}

\begin{algorithm}[ht]
\caption{Two-Stage $\lambda$-Selection: Bracketing then Bisection}
\label{algorithm_lambda}
\KwIn{Calibration set $\{(X_1, Y_1), \dots, (X_n, Y_n)\}$, score function $S$, target size $M>0$, tolerance $\varepsilon>0$, untrained neural network $\tilde{\alpha}_\theta$, initial $\lambda>0$}
\KwOut{$\lambda_M$ such that $\mathbb{E}\big[\rm Size\big(\hat{C}_n^{\tilde\alpha_\theta^{\rm test}}\!(X_{\rm test})\big)\big] \approx M$}
\BlankLine
\textbf{Step 1: Expansion phase to bracket $M$}  

\Repeat{$\overline{\rm Size}_n(\lambda)$ crosses $M$}{
    Train $\tilde{\alpha}_\theta$ with parameter $\lambda$ using Algorithm~\ref{algorithm}\;
    Compute $\overline{\rm Size}_n(\lambda) := \frac{1}{n}\sum_{j=1}^n \rm Size(\hat{C}_{n-1}^{\tilde\alpha_\theta^j}(X_j))$\;
    \uIf{$\overline{\rm Size}_n(\lambda) < M$}{
        $\lambda \leftarrow 2\lambda$\;
    }
    \Else{
        $\lambda \leftarrow \lambda/2$\;
    }
}
Set $\lambda_{\rm low}$ and $\lambda_{\rm high}$ as the two most recent values of $\lambda$ bracketing $M$\;

\BlankLine
\textbf{Step 2: Bisection refinement}  

\Repeat{$\big|\overline{\rm Size}_n(\lambda) - M\big| \le \varepsilon$}{
    $\lambda \leftarrow (\lambda_{\rm low} + \lambda_{\rm high})/2$\;
    Train $\tilde{\alpha}_\theta$ with parameter $\lambda$ using Algorithm~\ref{algorithm}\;
    Compute $\overline{\rm Size}_n(\lambda)$\;
    \uIf{$\overline{\rm Size}_n(\lambda) < M$}{
        $\lambda_{\rm low} \leftarrow \lambda$\;
    }
    \Else{
        $\lambda_{\rm high} \leftarrow \lambda$\;
    }
}

\BlankLine
\Return{$\lambda_M \leftarrow \lambda$}
\end{algorithm}

Building on this intuition, Algorithm~\ref{algorithm_lambda} describes our procedure for selecting the regularization parameter~$\lambda$ given a desired target average prediction set size~$M$. The goal is to find a $\lambda$ such that the empirical average size under the leave-one-out protocol is close to~$M$, which, by Theorem~\ref{thm:loo-size-consistency}, implies that the expected test-time size will also be close to~$M$.

A key ingredient in this procedure is the monotonicity of the leave-one-out size with respect to $\lambda$. In the \mbox{constant-$\alpha$} setting, Proposition~\ref{prop:monotone-size}, proved in Appendix~\ref{appendix:proof-monotone}, establishes this property formally. Monotonicity ensures that we can safely use a bracketing-and-bisection strategy: once we identify two values of $\lambda$ such that the leave-one-out average lies below and above~$M$, repeated halving of the interval guarantees convergence to the desired value. Empirically, we observe that this monotonicity approximately holds in the trained, input-dependent setting, ensuring the reliability of the bracketing-and-bisection procedure in practice.

\begin{figure*}[h!]
    \centering
    \includegraphics[width=\textwidth]{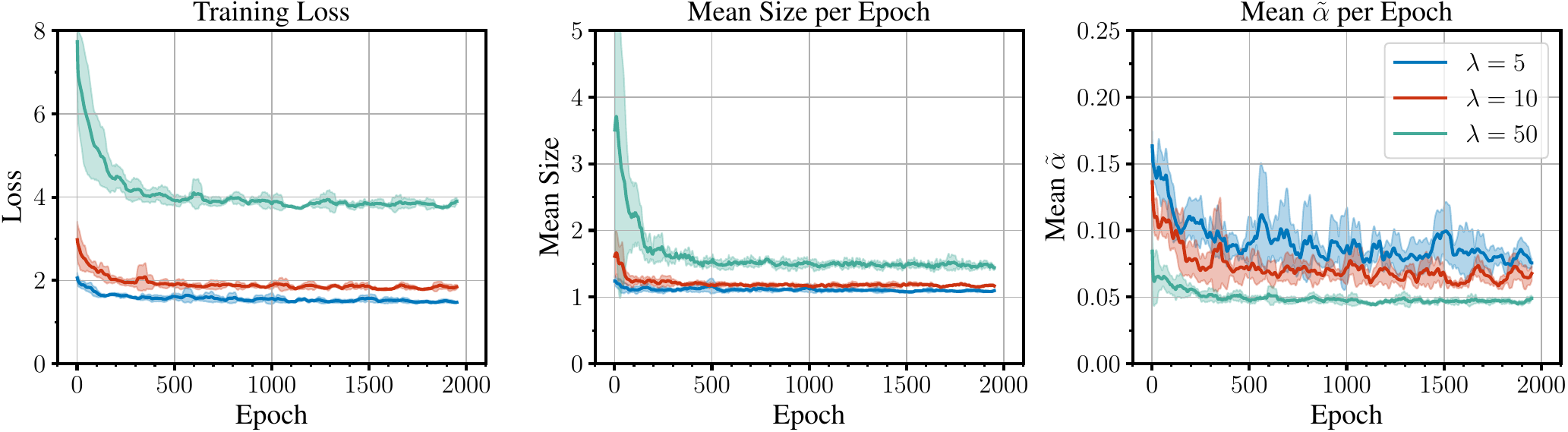}
    \caption{Training curves for $\lambda \in \{5,10,50\}$, averaged over 5 runs and smoothed with a moving average of size 50 for clarity. Shaded regions show $\pm1$ standard deviation across runs. \textbf{Left:} training loss. \textbf{Center:} mean set size. \textbf{Right:} mean adaptive miscoverage $\tilde\alpha$.}
    \label{fig:training}
\end{figure*}

\begin{proposition}[Monotonicity of leave-one-out size under constant $\alpha$]
\label{prop:monotone-size}
Define
\[
\overline{\rm Size}_n(\lambda) := \frac{1}{n}\sum_{j=1}^n \rm Size\big(\hat{C}_{n-1}^{\alpha^*(\lambda)}(X_j)\big),
\]
where 
\[
\alpha^*(\lambda) := \underset{\alpha \in (0,1)}{\rm argmin} \ \frac{1}{n}\sum_{j=1}^n \rm Size(\hat{C}_{n-1}^{\alpha}(X_j)) + \lambda \alpha,
\]
and assume that this minimizer exists.  

Assume moreover that the size function $\rm Size(\hat{C}_{n-1}^{\alpha}(X_j))$ is non-increasing in $\alpha$, which holds for any conformal set constructed using an e-value and threshold $1/\alpha$, in particular for the conformal set defined in (\ref{conformal_set}).  

Then $\overline{\rm Size}_n(\lambda)$ is non-decreasing in $\lambda$.
\end{proposition}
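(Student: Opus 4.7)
The plan is to reduce the statement to two elementary facts: (a) the optimizer $\alpha^*(\lambda)$ is non-increasing in $\lambda$, and (b) the size function $F(\alpha) := \tfrac1n \sum_j \textnormal{Size}(\hat{C}_{n-1}^\alpha(X_j))$ is non-increasing in $\alpha$ by hypothesis. Composing these immediately gives $\overline{\textnormal{Size}}_n(\lambda) = F(\alpha^*(\lambda))$ non-decreasing in $\lambda$.

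For (a) I would run the standard revealed-preference (monotone comparative statics) argument. Fix $\lambda_1 < \lambda_2$ and let $\alpha_i = \alpha^*(\lambda_i)$ for $i=1,2$, both assumed to exist. By optimality at $\lambda_1$,
\[
F(\alpha_1) + \lambda_1 \alpha_1 \le F(\alpha_2) + \lambda_1 \alpha_2,
\]
and by optimality at $\lambda_2$,
\[
F(\alpha_2) + \lambda_2 \alpha_2 \le F(\alpha_1) + \lambda_2 \alpha_1.
\]
Adding the two inequalities and cancelling the $F$ terms yields $(\lambda_1 - \lambda_2)(\alpha_1 - \alpha_2) \le 0$, hence $\alpha_1 \ge \alpha_2$. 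Thus $\lambda \mapsto \alpha^*(\lambda)$ is non-increasing. (If the argmin is not a singleton, the same inequality shows that every selection preserves the ordering across values of $\lambda$, which is the relevant statement.)

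For (b), I would briefly justify the monotonicity remark the proposition invokes. Any conformal set of the form $\hat{C}_{n-1}^\alpha(x) = \{y : E_{n-1}(x,y) < 1/\alpha\}$ for a fixed e-value function $E_{n-1}$ is manifestly shrinking in $\alpha$: if $\alpha < \alpha'$ then $1/\alpha > 1/\alpha'$, so $\hat{C}_{n-1}^{\alpha'}(x) \subseteq \hat{C}_{n-1}^\alpha(x)$, and therefore $\textnormal{Size}(\hat{C}_{n-1}^{\alpha'}(X_j)) \le \textnormal{Size}(\hat{C}_{n-1}^{\alpha}(X_j))$ for each $j$. Averaging over $j$ gives that $F$ is non-increasing. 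In particular, this applies to the set in (\ref{conformal_set}).

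Combining (a) and (b): for $\lambda_1 < \lambda_2$ we have $\alpha^*(\lambda_1) \ge \alpha^*(\lambda_2)$, and since $F$ is non-increasing, $F(\alpha^*(\lambda_1)) \le F(\alpha^*(\lambda_2))$, i.e.\ $\overline{\textnormal{Size}}_n(\lambda_1) \le \overline{\textnormal{Size}}_n(\lambda_2)$. I do not anticipate a real obstacle here; the only delicate point is the potential non-uniqueness of the minimizer, which is handled by noting that the pairwise inequality holds for any selection of $\alpha^*(\lambda_i)$ from the respective argmin sets, so the monotonicity statement is unambiguous under the hypothesis of the proposition.
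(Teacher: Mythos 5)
Your proof is correct and takes essentially the same route as the paper: both use the standard revealed-preference argument (add the two optimality inequalities, cancel the size terms, deduce $\alpha^*(\lambda_2) \le \alpha^*(\lambda_1)$) and then compose with the non-increasing size function. Your extra remarks on justifying the monotonicity of the size in $\alpha$ and on the non-uniqueness of the argmin are minor elaborations, not a different approach.
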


\section{EXPERIMENTAL EVALUATION}

To demonstrate the effectiveness of our approach, we conduct experiments on the  CIFAR-10 dataset \citep{krizhevsky2009learning}, a standard benchmark in computer vision consisting of 60,000 $32 \times 32$ color images evenly distributed across 10 object classes (such as airplanes, cats, and trucks). The dataset is split into 50,000 training examples and 10,000 test examples.

We train a deep neural network, denoted by $f$, on the full CIFAR-10 training set and treat it as a black-box predictor throughout our experiments. Specifically, we use an EfficientNet-B0 model \citep{tan2019efficientnet} trained to minimize the cross-entropy loss using stochastic gradient descent (SGD) with momentum 0.9, a learning rate of 0.1, weight decay of $5 \times 10^{-4}$, and cosine annealing over 100 epochs. We use a batch size of 512 and apply standard data augmentation techniques during training. At the end of training, the model $f$ achieves a training accuracy of 98.6\% and a test accuracy of~91.1\%.

For our experiments, we choose the cross-entropy as the score function:
\[
S(x,y)=-\log p_f(y|x),
\]
where $p_f(y|x)$ is the probability that the pretrained model $f$ assigns to label $y$ for a given input image $x$.

To construct the data needed for training an adaptive coverage policy, we randomly split the CIFAR-10 test set into a calibration set and a remaining set from which we randomly sample a test point. In our experiments, we fix the calibration set size to $n = 100$.

For the neural network $\tilde{\alpha}_\theta$, we use a simple architecture consisting of a fully connected feedforward network with one hidden layer of 32 units and ReLU activation. The output layer has a single neuron followed by a sigmoid activation to produce outputs between 0 and~1.

We optimize the loss function (\ref{loss}) using sigmoid smoothing (see Remark~\ref{rk:classif}) with $k=100$, and the Adam optimizer \citep{kingma2015adam} with a learning rate of~$1\times 10^{-3}$. Training is performed with a batch size of~64 over 2000 epochs.

For a calibration set sampled uniformly at random, Figure~\ref{fig:training} illustrates the training dynamics of the coverage policy produced by Algorithm \ref{algorithm} under different choices of the regularization strength $\lambda \in \{5, 10, 50\}$. The training curves decrease and converge, indicating that the model is optimizing effectively. Moreover, both the prediction set size and the expected miscoverage stabilize asymptotically as training progresses. This effective convergence highlights the value of the leave-one-out procedure, which provides sufficient signal for successfully training a coverage policy.

\begin{figure}[h!]
    \centering
    \includegraphics[width=.45\textwidth]{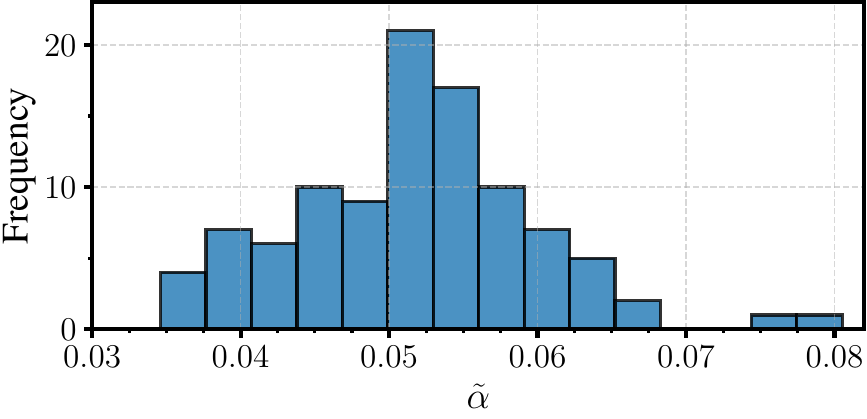}
    \caption{Distribution of adaptive miscoverage $\tilde\alpha$ across 100 randomly sampled test points.}
    \label{fig:hist_alpha}
\end{figure}

Once training is complete, we can evaluate the model at test time. We begin by illustrating the results with a model $\tilde{\alpha}_\theta$ trained using $\lambda = 50$. The model is evaluated on 100 test points sampled uniformly at random from the portion of the test set that is not used for calibration. Figure~\ref{fig:hist_alpha} displays the resulting distribution of adaptive miscoverage levels $\tilde{\alpha}$. The $\tilde{\alpha}$ values are reasonably spread, as would be expected if the coverage policy adapts to the varying difficulty of different predictions. 

Figure~\ref{fig:hist_set_sizes} shows the conformal sets produced by our method, henceforth referred to as \methodname{e-adaptive}, with an average set size of 1.30. We compare this to two baselines. The first, \methodname{e-fixed}, also uses e-values but constructs conformal sets (\ref{conformal_set}) with a fixed $\alpha$ equal to the empirical mean $\mathbb{E}[\tilde{\alpha}]$ computed from the 100 test points under our adaptive method. This yields a larger average set size of 1.42, highlighting the efficiency gains of adapting~$\alpha$ to individual samples. The second baseline, \methodname{p-fixed}, employs standard conformal prediction with p-values and the same fixed $\alpha$. While it achieves the smallest average set size of 1.01, it provides no principled way to select $\alpha$ in practice, making the resulting set sizes unpredictable. By contrast, both \methodname{e-adaptive} and \methodname{e-fixed} naturally support post-hoc selection of $\alpha$, combining flexibility with valid coverage guarantees.

\begin{figure}[h!]
    \centering
    \includegraphics[width=.45\textwidth]{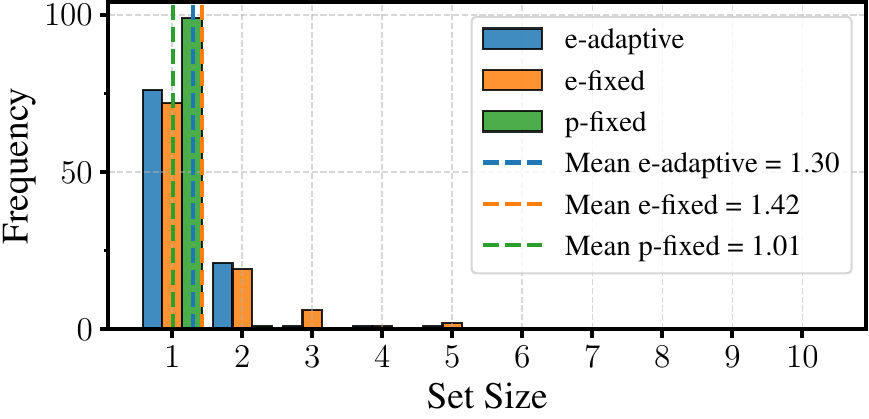}
    \caption{Distribution of conformal set sizes across 100 test points for the three methods.}
    \label{fig:hist_set_sizes}
\end{figure}

To provide a clearer picture of the benefits of our \methodname{e-adaptive} method, we aggregate results over five independent calibration sets. For each calibration set, we apply Algorithm \ref{algorithm} to train a network $\tilde{\alpha}_\theta$ and construct the associated conformal sets for different values of $\lambda$. We then compute the average set size over 100 randomly chosen test points. We report the mean and standard deviation of these average set sizes across the five runs, and compare against the two baselines introduced above (\methodname{e-fixed} and \methodname{p-fixed}) in Table \ref{tab:results}.

To select the regularization strength in practice, we can apply Algorithm \ref{algorithm_lambda}, which iteratively adjusts $\lambda$ to achieve some target mean prediction set size $M$. Figure~\ref{fig:lambda_selection} illustrates this process for an initial $\lambda=40$, target $M=2$, and tolerance $\varepsilon=0.1$. The bracketing phase completes in a single iteration, as the mean size is below $M$ for $\lambda=40$ and above $M$ for $\lambda=80$. The subsequent bisection phase then refines $\lambda$ through $60$, $70$, and $65$, converging smoothly to a value that meets the target within the prescribed tolerance.

In this example, Algorithm~\ref{algorithm_lambda} converges to a final mean set size of 2.00. We compare this value to the expected test-time prediction set, computed over 100 randomly sampled test points, yielding 2.07, which is very close to the final mean. This suggests that Theorem \ref{thm:loo-size-consistency}, proven in the idealized constant-output case, extends in practice to neural networks with input-dependent outputs. Moreover, the middle plot from Figure~\ref{fig:training} shows that the monotonicity of the set size with respect to $\lambda$, established in Proposition~\ref{prop:monotone-size} for the constant case, also holds for these more complex networks. Intuitively, this is expected because the network’s $\tilde\alpha$ outputs vary smoothly with the inputs in practice. Moreover, the calibration sum changes little across data points, as it concentrates around $n$ times the mean expected score. Figure~\ref{fig:hist_alpha} confirms that the distribution of network outputs remains smooth as test features vary. Together, these observations indicate that the theoretical properties derived for constant-output networks provide reliable guidance in the input-dependent setting.

To further support the theoretical guarantees in Theorem \ref{thm:loo-size-consistency}, we present an additional regression experiment in Appendix \ref{appendix:reg}.

\begin{table*}[h]
\centering
\caption{Prediction set size comparison across methods and regularization values.}
\begin{tabular}{lccc||c}
\toprule
Method & $\lambda=5$ & $\lambda=10$ & $\lambda=50$ & Post-hoc selection of $\alpha$? \\
\midrule
\methodname{e-adaptive} & 1.21$\pm$0.10 & 1.23$\pm$0.12 & 1.63$\pm$0.19 & \cmark \\
\methodname{e-fixed}    & 1.26$\pm$0.12 & 1.33$\pm$0.19 & 1.92$\pm$0.29 & \cmark \\
\methodname{p-fixed}    & 0.96$\pm$0.05 & 0.98$\pm$0.05 & 1.18$\pm$0.12 & \xmark \\
\bottomrule
\end{tabular}
\label{tab:results}
\end{table*}

\begin{figure}[h!]
    \centering
    \includegraphics[width=.45\textwidth]{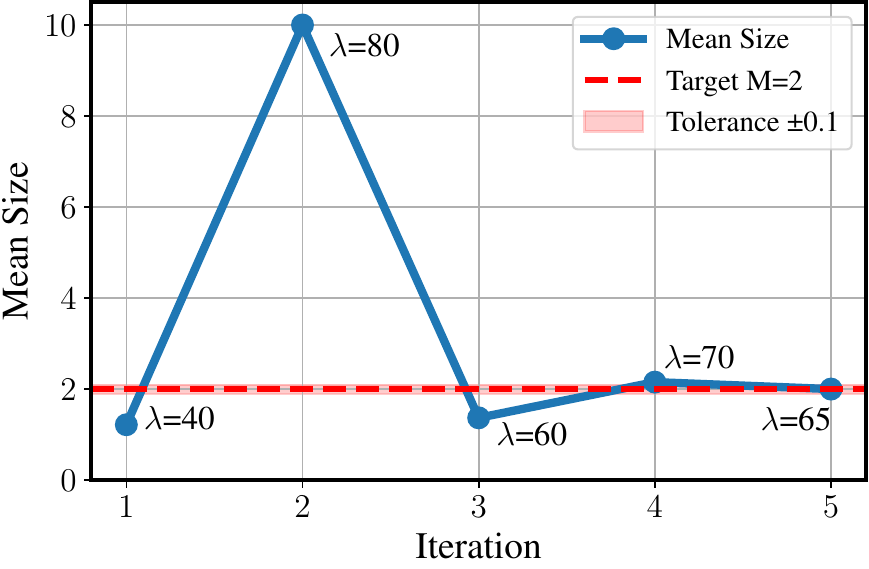}
    \caption{Evolution of $\lambda$ and mean set size during Algorithm~\ref{algorithm_lambda}, showing the bracketing and bisection phases converging to the target $M=2$.}
    \label{fig:lambda_selection}
\end{figure}

\section{CONCLUSION}

We have introduced an extension of conformal prediction that allows the miscoverage level to be set adaptively. By leveraging e-values and their post-hoc validity, our approach tailors the miscoverage level to the difficulty of each test sample via a coverage policy trained on the calibration set using a leave-one-out procedure. The method is theoretically grounded, providing adaptive miscoverage at deployment while maintaining valid marginal coverage. Unlike standard conformal methods, which offer no principled way to choose $\alpha$ and provide no insight into expected set size, our approach both adapts coverage to the data and enables optimizing the coverage policy for a desired expected test-time set size.

Note that when a history of conformal prediction episodes is available, the coverage policy can be trained directly on them, making leave-one-out unnecessary. However, in such cases, it is also possible to anticipate the optimal coverage level statistically and apply standard conformal prediction methods with p-values, which may be preferable. The strength of our method lies in the fact that the leave-one-out approach allows us to emulate pseudo conformal prediction episodes to train a coverage policy, without requiring access to any prior history of conformal predictions.


\subsubsection*{Acknowledgements}
The authors thank the anonymous reviewers for their helpful feedback that improved this work.

Funded by the European Union (ERC-2022-SYG-OCEAN-101071601).
Views and opinions expressed are however those of the author(s) only and do not
necessarily reflect those of the European Union or the European Research Council
Executive Agency. Neither the European Union nor the granting authority can be
held responsible for them. 

This publication is part of the Chair ``Markets and Learning,'' supported by Air Liquide, BNP PARIBAS ASSET MANAGEMENT Europe, EDF, Orange and SNCF, sponsors of the Inria Foundation.

This work has also received support from the French government, managed by the National Research Agency, under the France 2030 program with the reference ``PR[AI]RIE-PSAI" (ANR-23-IACL-0008).

\bibliographystyle{plainnat} 
\bibliography{references}

@article{gauthier2025evaluesexpandscopeconformal,
   title={E-Values Expand the Scope of Conformal Prediction},
   author={Etienne Gauthier and Francis Bach and Michael I. Jordan},
   journal={arXiv preprint arXiv:2503.13050},
   year={2025}
}

@book{vovk2005algorithmiclearning,
author = {Vovk, Vladimir and Gammerman, Alex and Shafer, Glenn},
title = {Algorithmic Learning in a Random World},
publisher = {Springer-Verlag},
year = {2005}
}

@inproceedings{papadopoulos2002inductivecm,
  title={Inductive Confidence Machines for Regression},
  author={Harris Papadopoulos and Kostas Proedrou and Vladimir Vovk and Alexander Gammerman},
  booktitle={European Conference on Machine Learning},
  year={2002}
}

@article{angelopoulos2023gentle,
author = {Angelopoulos, Anastasios N. and Bates, Stephen},
title = {Conformal Prediction: A Gentle Introduction},
year = {2023},
issue_date = {Mar 2023},
publisher = {Now Publishers Inc.},
address = {Hanover, MA, USA},
volume = {16},
number = {4},
journal = {Foundations and Trends in Machine Learning},
pages = {494–591},
numpages = {114}
}

@article{angelopoulos2024theoreticalfoundationsconformalprediction,
   title={Theoretical Foundations of Conformal Prediction},
   author={Anastasios N. Angelopoulos and Rina Foygel Barber and Stephen Bates},
   journal={arXiv preprint arXiv:2411.11824},
   year={2024}
}

@article{Vovk2025conformaleprediction,
   title={Conformal e-prediction},
   volume={166},
   journal={Pattern Recognition},
   publisher={Elsevier BV},
   author={Vovk, Vladimir},
   year={2025},
 pages={111674} }

@InProceedings{balinsky2024EnhancingCP,  title =  {Enhancing Conformal Prediction Using E-Test Statistics},  author =       {Balinsky, Alexander A. and Balinsky, Alexander David},  booktitle =  {Symposium on Conformal and Probabilistic Prediction with Applications},  year =  {2024}}

@Article{wang2022fdr,
  author={Ruodu Wang and Aaditya Ramdas},
  title={{False discovery rate control with e‐values}},
  journal={Journal of the Royal Statistical Society Series B},
  year=2022,
  volume={84},
  number={3},
  pages={822-852},
  keywords={}
}

@article{xu2024postselectioninference,
author = {Ziyu Xu and Ruodu Wang and Aaditya Ramdas},
title = {{Post-selection inference for e-value based confidence intervals}},
volume = {18},
journal = {Electronic Journal of Statistics},
number = {1},
publisher = {Institute of Mathematical Statistics and Bernoulli Society},
pages = {2292 -- 2338},
keywords = {e-confidence interval, e-value, false coverage rate, Post-selection inference},
year = {2024}
}

@article{grunwald2024beyond,
author = {Peter Grünwald },
title = {Beyond {N}eyman-{P}earson: E-values enable hypothesis testing with a data-driven alpha},
journal = {Proceedings of the National Academy of Sciences},
volume = {121},
number = {39},
pages = {e2302098121},
year = {2024}}

@article{ramdas2024hypothesistestingevalues,
  title={Hypothesis testing with e-values},
  author={Ramdas, Aaditya and Wang, Ruodu},
  journal={Foundations and Trends{\textregistered} in Statistics},
  volume={1},
  number={1-2},
  pages={1--390},
  year={2025}
  }

@article{koning2025measuring,
   title={Measuring Evidence against Exchangeability and Group Invariance with E-values},
   author={Nick W. Koning},
   journal={arXiv preprint arXiv:2310.01153},
   year={2025}
}

@article{koning2024posthocalphahypothesistesting,
   title={Post-hoc $\alpha$ Hypothesis Testing and the Post-hoc p-value},
   author={Nick W. Koning},
   journal={arXiv preprint arXiv:2312.08040},
   year={2024}
}

@techreport{krizhevsky2009learning,
  title        = {Learning Multiple Layers of Features from Tiny Images},
  author       = {Krizhevsky, Alex},
  year         = {2009},
  institution  = {University of Toronto},
  type         = {Technical Report}
}

@InProceedings{tan2019efficientnet,
  title = 	 {{E}fficient{N}et: Rethinking Model Scaling for Convolutional Neural Networks},
  author =       {Tan, Mingxing and Le, Quoc},
  booktitle = 	 {International Conference on Machine Learning},
  year = 	 {2019}
}

@article{hoeffding1963,
  author = {Hoeffding, Wassily},
  journal = {Journal of the American Statistical Association},
  number = 301,
  pages = {13-30},
  title = {Probability Inequalities for Sums of Bounded Random Variables},
  volume = 58,
  year = 1963
}

@article{vovk2021evalues,
  title={E-values: Calibration, combination and applications},
  author={Vovk, Vladimir and Wang, Ruodu},
  journal={The Annals of Statistics},
  volume={49},
  number={3},
  pages={1736--1754},
  year={2021},
  publisher={Institute of Mathematical Statistics}
}

@inproceedings{
angelopoulos2021uncertainty,
title={Uncertainty Sets for Image Classifiers using Conformal Prediction},
author={Anastasios N. Angelopoulos and Stephen Bates and Michael I. Jordan and Jitendra Malik},
booktitle={International Conference on Learning Representations},
year={2021}
}

@inproceedings{
kingma2015adam,
title={Adam: A method for stochastic optimization},
author={Diederik P. Kingma and Jimmy Ba},
booktitle={International Conference on Learning Representations},
year={2015}
}

@inproceedings{cherian2024llmvalidity,
    author = {Cherian, John and Gibbs, Isaac and Cand\`es, Emmanuel},
    title = "Large language model validity via enhanced conformal prediction methods",
    booktitle = "Advances in Neural Information Processing Systems",
    year = 2024 
}

@book{vaart1998asymptotics, place={Cambridge}, title={Asymptotic Statistics}, publisher={Cambridge University Press}, author={van der Vaart, Aad W.}, year={1998}, collection={Cambridge Series in Statistical and Probabilistic Mathematics}}

@article{lei2018inference,
author = {Jing Lei and Max G’Sell and Alessandro Rinaldo and Ryan J. Tibshirani and Larry Wasserman},
title = {Distribution-Free Predictive Inference for Regression},
journal = {Journal of the American Statistical Association},
volume = {113},
number = {523},
pages = {1094--1111},
year = {2018},
publisher = {ASA Website}
}

@book{balasubramanian2014conformal,
author = {Balasubramanian, Vineeth and Ho, Shen-Shyang and Vovk, Vladimir},
title = {Conformal Prediction for Reliable Machine Learning: Theory, Adaptations and Applications},
year = {2014}
}

@inproceedings{su2024llm,
  title={{API} Is Enough: Conformal Prediction for Large Language Models Without Logit-Access},
  author={Jiayuan Su and Jing Luo and Hongwei Wang and Lu Cheng},
  booktitle={Conference on Empirical Methods in Natural Language Processing},
  year={2024}
}

@article{laxhammar2015inductive,
author = {Laxhammar, Rikard and Falkman, G\"{o}ran},
title = {Inductive conformal anomaly detection for sequential detection of anomalous sub-trajectories},
year = {2015},
volume = {74},
number = {1–2},
journal = {Annals of Mathematics and Artificial Intelligence},
pages = {67–94}
}

@article{bates2023testing,
author = {Bates, Stephen and Candès, Emmanuel and Lei, Lihua and Romano, Yaniv and Sesia, Matteo},
year = {2023},
pages = {},
title = {Testing for outliers with conformal p-values},
volume = {51},
number={1},
journal = {The Annals of Statistics}
}

@inproceedings{fisch2021few,
  title={Few-shot conformal prediction with auxiliary tasks},
  author={Fisch, Adam and Schuster, Tal and Jaakkola, Tommi and Barzilay, Regina},
  booktitle={International Conference on Machine Learning},
  year={2021}
}

@InProceedings{chernozhukov2018exact,
  title = 	 {Exact and Robust Conformal Inference Methods for Predictive Machine Learning with Dependent Data},
  author =       {Chernozhukov, Victor and W\"{u}thrich, Kaspar and Yinchu, Zhu},
  booktitle = 	 {Conference On Learning Theory},
  year = 	 {2018}
}

@InProceedings{johnstone2021conformal,  title =  {Conformal uncertainty sets for robust optimization},  author =       {Johnstone, Chancellor and Cox, Bruce},  booktitle =  {Proceedings of the Symposium on Conformal and Probabilistic Prediction and Applications},  pages =  {72--90},  year =  {2021}, volume =  {152}}

@InProceedings{cella2021valid,
  title = 	 {Valid Inferential Models for Prediction in Supervised Learning Problems},
  author =       {Cella, Leonardo and Martin, Ryan},
  booktitle = 	 {Proceedings of the International Symposium on Imprecise Probability: Theories and Applications},
  pages = 	 {72--82},
  year = 	 {2021},
  volume = 	 {147}
}

@article{simmons2011false,
author = {Joseph P. Simmons and Leif D. Nelson and Uri Simonsohn},
title ={False-Positive Psychology: Undisclosed Flexibility in Data Collection and Analysis Allows Presenting Anything as Significant},
journal = {Psychological Science},
volume = {22},
number = {11},
pages = {1359-1366},
year = {2011}
}

@article{head2015phacking,
    author = {Head, Megan L. AND Holman, Luke AND Lanfear, Rob AND Kahn, Andrew T. AND Jennions, Michael D.},
    journal = {PLOS Biology},
    publisher = {Public Library of Science},
    title = {The Extent and Consequences of P-Hacking in Science},
    year = {2015},
    month = {03},
    volume = {13},
    pages = {1-15},
    number = {3}
}

@article{grunwald2024safetesting,
  title={Safe testing},
  author={Gr{\"u}nwald, Peter and de Heide, Rianne and Koolen, Wouter},
  journal={Journal of the Royal Statistical Society Series B: Statistical Methodology},
  volume={86},
  number={5},
  pages={1091--1128},
  year={2024},
  publisher={Oxford University Press UK}
}

@book{shafer2019gametheoretic,
title = "Game-Theoretic Foundations for Probability and Finance",
author = "Glenn Shafer and Vladimir Vovk",
year = "2019",
publisher = "Wiley"
}

@article{chugg2025admissibility,
title = {On admissibility in post-hoc hypothesis testing},
journal = {International Journal of Approximate Reasoning},
volume = {191},
pages = {109634},
year = {2026},
author = {Ben Chugg and Tyron Lardy and Aaditya Ramdas and Peter Grünwald},

}

@inproceedings{gauthier2025backwardconformalprediction,
author = {Etienne Gauthier and Francis Bach and Michael I. Jordan},
title = {Backward Conformal Prediction},
year = {2025},
booktitle = {Advances in Neural Information Processing Systems}
}

@article{liu2026stbcptighteningcoveragebound,title={ST-BCP: Tightening Coverage Bound for Backward Conformal Prediction via Non-Conformity Score Transformation}, 
      author={Junxian Liu and Hao Zeng and Hongxin Wei},
  journal={arXiv preprint arXiv:2602.01733},
      year={2026}
}

@inproceedings{gammerman1998learning,
author = {Gammerman, Alex and Vovk, Volodya and Vapnik, Vladimir},
title = {Learning by transduction},
year = {1998},
booktitle = {Conference on Uncertainty in Artificial Intelligence}
}

@article{shafer2008tutorial,
author = {Shafer, Glenn and Vovk, Vladimir},
title = {A Tutorial on Conformal Prediction},
year = {2008},
volume = {9},
journal = {Journal of Machine Learning Research},
pages = {371–421},
}

@inproceedings{berthet2020perturbed,
author = {Berthet, Quentin and Blondel, Mathieu and Teboul, Olivier and Cuturi, Marco and Vert, Jean-Philippe and Bach, Francis},
title = {Learning with differentiable perturbed optimizers},
year = {2020},
booktitle = {Advances in Neural Information Processing Systems}
}

\section*{Checklist}

\begin{enumerate}

  \item For all models and algorithms presented, check if you include:
  \begin{enumerate}
    \item A clear description of the mathematical setting, assumptions, algorithm, and/or model. [Yes]
    \item An analysis of the properties and complexity (time, space, sample size) of any algorithm. [Yes]
    \item (Optional) Anonymized source code, with specification of all dependencies, including external libraries. [No] We will publicly release the code.
  \end{enumerate}

  \item For any theoretical claim, check if you include:
  \begin{enumerate}
    \item Statements of the full set of assumptions of all theoretical results. [Yes]
    \item Complete proofs of all theoretical results. [Yes]
    \item Clear explanations of any assumptions. [Yes]     
  \end{enumerate}

  \item For all figures and tables that present empirical results, check if you include:
  \begin{enumerate}
    \item The code, data, and instructions needed to reproduce the main experimental results (either in the supplemental material or as a URL). [Yes]
    \item All the training details (e.g., data splits, hyperparameters, how they were chosen). [Yes]
    \item A clear definition of the specific measure or statistics and error bars (e.g., with respect to the random seed after running experiments multiple times). [Yes]
    \item A description of the computing infrastructure used. (e.g., type of GPUs, internal cluster, or cloud provider). [Yes]
  \end{enumerate}

  \item If you are using existing assets (e.g., code, data, models) or curating/releasing new assets, check if you include:
  \begin{enumerate}
    \item Citations of the creator If your work uses existing assets. [Yes]
    \item The license information of the assets, if applicable. [Yes]
    \item New assets either in the supplemental material or as a URL, if applicable. [Yes]
    \item Information about consent from data providers/curators. [Yes]
    \item Discussion of sensible content if applicable, e.g., personally identifiable information or offensive content. [Not Applicable]
  \end{enumerate}

  \item If you used crowdsourcing or conducted research with human subjects, check if you include:
  \begin{enumerate}
    \item The full text of instructions given to participants and screenshots. [Not Applicable]
    \item Descriptions of potential participant risks, with links to Institutional Review Board (IRB) approvals if applicable. [Not Applicable]
    \item The estimated hourly wage paid to participants and the total amount spent on participant compensation. [Not Applicable]
  \end{enumerate}

\end{enumerate}

\clearpage
\appendix
\thispagestyle{empty}

\onecolumn
\aistatstitle{Supplementary Materials}

\section{PROOF OF THEOREM \ref{thm:loo-size-consistency}}
\label{appendix:proof}

\begin{theorem}[Consistency of leave-one-out size]
Assume that the calibration samples $(X_i,Y_i)$ are i.i.d., and let $\alpha \in (0,1)$ be a given target miscoverage level.  

Assume one of the following two cases holds:
\begin{enumerate}[label=(\roman*)]
\item \textbf{(Classification, sigmoid smoothing)}  
The size is given by the smooth sigmoid approximation defined in (\ref{eq:smooth_sigmoid}) with some parameter $k>0$.  
The score function $S$ is bounded and takes values in $[S_{\min},S_{\max}]$ with $0<S_{\min}\le S_{\max}<\infty$, and $n > S_{\max}/S_{\min}$.  

\item \textbf{(Regression, MAE score)}  
The size is defined in (\ref{eq:size_reg}).  
The score function $S$ is bounded and takes values in $[0,S_{\max}]$ with $S_{\max}<\infty$, and the miscoverage level satisfies $\alpha > 1/n$.  
\end{enumerate}

Let \(\overline{\rm Size}_n := \frac{1}{n}\sum_{j=1}^n \rm Size(\hat{C}_{n-1}^{\alpha}(X_j))\) denote average size under the leave-one-out protocol. Then, under either (i) or (ii),  
\[
\Big|\overline{\rm Size}_n - \mathbb{E}\left[\rm Size\left(\hat{C}_n^{\alpha}(X_{\rm test})\right)\right]\Big| = O_P\!\left(\frac{1}{\sqrt{n}}\right),
\]
i.e., the average size consistently estimates the expected test-time size at rate \(1/\sqrt{n}\) in probability.
\end{theorem}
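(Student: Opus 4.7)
The plan is to sandwich the two quantities between a common i.i.d.\ proxy and apply the triangle inequality. Let $\mu := \mathbb{E}[S(X,Y)]$ and introduce the ``asymptotic'' size function obtained by freezing the normalizing average at its population limit:
\[
g(x) := \sum_{y\in\mathcal{Y}}\sigma\!\left(k\!\left(\tfrac{1}{\alpha}-\tfrac{S(x,y)}{\mu}\right)\right)\ \text{in case (i),}\qquad g\equiv \tfrac{2\mu}{\alpha}\ \text{in case (ii).}
\]
With $T_n := \sum_{i=1}^n S(X_i,Y_i)$ and $T_{-j} := T_n - S(X_j,Y_j)$, I decompose $\overline{\rm Size}_n-\mathbb{E}[\rm Size(\hat{C}_n^{\alpha}(X_{\rm test}))]$ as $\text{(A)}+\text{(B)}+\text{(C)}$, with $\text{(A)}=\overline{\rm Size}_n-\tfrac{1}{n}\textstyle\sum_j g(X_j)$, $\text{(B)}=\tfrac{1}{n}\textstyle\sum_j g(X_j)-\mathbb{E}[g(X)]$, and $\text{(C)}=\mathbb{E}[g(X)]-\mathbb{E}[\rm Size(\hat{C}_n^{\alpha}(X_{\rm test}))]$. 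Term (B) is $O_P(1/\sqrt n)$ by the CLT applied to the i.i.d.\ bounded variables $g(X_j)$, so it remains to control (A) and (C).

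For the regression case (ii), the analysis is purely algebraic. Using $\sum_{j=1}^n T_{-j} = (n-1)T_n$, one gets $\overline{\rm Size}_n = \tfrac{2(n-1)T_n}{n(n\alpha-1)}$ and $\mathbb{E}[\rm Size(\hat{C}_n^{\alpha}(X_{\rm test}))] = \tfrac{2n\mu}{(n+1)\alpha-1}$. Their difference splits as $A_n(T_n/n-\mu)+\mu(A_n-B_n)$, with prefactors $A_n, B_n \to 2/\alpha$ and $A_n - B_n = \tfrac{2(1-\alpha)}{(n\alpha-1)((n+1)\alpha-1)} = O(1/n^2)$ by direct simplification. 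The first piece is $O_P(1/\sqrt n)$ by the CLT for $T_n/n$, and the assumption $\alpha > 1/n$ keeps the denominators bounded away from zero; in particular (A), (C) are both $O_P(1/\sqrt n)$ since here $g$ is constant and so (A) and (C) essentially collapse to these explicit terms.

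For the classification case (i), the crux is a quantitative comparison between the true size and $g$: for any $x$ and any $m,T$ with $T \ge m S_{\min}$,
\[
\left|\rm Size(\hat{C}_m^{\alpha}(x))-g(x)\right| \le C\!\left(\left|\tfrac{T}{m}-\mu\right|+\tfrac{1}{m}\right),
\]
where $C$ depends only on $k, |\mathcal{Y}|, S_{\min}, S_{\max}$. I will prove this by putting $(m+1)S(x,y)/(T+S(x,y))$ and $S(x,y)/\mu$ over a common denominator; $S \in [S_{\min},S_{\max}]$ ensures $T + S(x,y) \ge (m+1)S_{\min}$, so the denominators are uniformly bounded away from zero (the hypothesis $n > S_{\max}/S_{\min}$ ensures this regime is non-degenerate), and the $k/4$-Lipschitz property of $\sigma$ followed by summation over $|\mathcal{Y}|$ labels transfers the estimate to the size. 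Plugging in $m=n-1$, $T=T_{-j}$, $x=X_j$, and noting that $T_{-j}/(n-1) = T_n/n + O(1/n)$ uniformly in $j$ by boundedness of $S$, the CLT estimate $|T_n/n - \mu| = O_P(1/\sqrt n)$ yields $\text{(A)}=O_P(1/\sqrt n)$. The same pointwise bound applied with $m=n$ and the test point, then integrated using $\mathbb{E}|T_n/n-\mu| = O(1/\sqrt n)$, gives $\text{(C)}=O(1/\sqrt n)$.

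The main subtlety is the dependence among the leave-one-out summands in (A), which share $n-1$ of the calibration points; benchmarking each summand against the decoupled quantity $g(X_j)$ cleanly converts this into (a) a single concentration estimate for $T_n/n$ and (b) a standard CLT for an i.i.d.\ bounded sum, sidestepping the dependence. All remaining steps are routine algebraic simplifications and Lipschitz bounds enabled by the uniform lower bound on the denominators.
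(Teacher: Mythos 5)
Your $(A)+(B)+(C)$ decomposition---centering both the leave-one-out average and the test-time expectation against a common population-denominator proxy $g$, bounding the middle term by concentration of an i.i.d.\ bounded sum, and bounding the outer terms via Lipschitz continuity of the sigmoid sum together with a perturbation bound on the score ratios---is essentially the paper's $T_1+T_2+T_3$ decomposition, and your regression treatment reduces to the paper's fluctuation/bias split. The differences are minor: you invoke the CLT where the paper uses Hoeffding's inequality, and you prove the ratio-perturbation lemma directly by common denominators where the paper imports the corresponding $\|\mathbf{E}^j-\tilde{\mathbf{E}}^j\|_\infty$ bound from prior work.
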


\begin{proof}
Let $\mu := \mathbb{E}[S(X,Y)]$ denote the expected score.  
We focus first on case (ii) (regression with MAE score). We want to compare the leave-one-out average size
\(\overline{\rm Size}_n := \frac{1}{n} \sum_{j=1}^n \rm Size(\hat{C}_{n-1}^{\alpha}(X_j))\)
with the expected test-time size 
\(\mathbb{E}[\rm Size(\hat{C}_n^{\alpha}(X_{\rm test}))]\). For the MAE score, these quantities have the explicit forms
\[
\overline{\rm Size}_n = \frac{1}{n} \sum_{j=1}^n \frac{2 \sum_{i \neq j} S(X_i,Y_i)}{n\alpha - 1}, 
\qquad 
\mathbb{E}[\rm Size(\hat{C}_n^{\alpha}(X_{\rm test}))] = \frac{2 n \mu}{(n+1)\alpha - 1}.
\]
Subtracting and rearranging terms gives
\begin{align*}
\Big|\overline{\rm Size}_n - \mathbb{E}[\rm Size(\hat{C}_n^{\alpha}(X_{\rm test}))]\Big|
&= \Bigg| \frac{2(n-1)}{n\alpha - 1} \cdot \frac{1}{n} \sum_{i=1}^n S(X_i,Y_i) - \frac{2 n \mu}{(n+1)\alpha - 1} \Bigg| \\
&\le \underbrace{\frac{2(n-1)}{n\alpha - 1} \Bigg|\frac{1}{n} \sum_{i=1}^n S(X_i,Y_i) - \mu \Bigg|}_{\text{fluctuation term}} 
+ \underbrace{\Bigg| \frac{2(n-1)\mu}{n\alpha - 1} - \frac{2 n \mu}{(n+1)\alpha - 1} \Bigg|}_{\text{bias term}}.
\end{align*}
The first term is a standard concentration term. Since the $S(X_i,Y_i)$ are i.i.d.\ and bounded by assumption, Hoeffding's inequality \citep{hoeffding1963} gives, for any $\delta>0$,  
\[
\Bigg|\frac{1}{n} \sum_{i=1}^n S(X_i,Y_i) - \mu\Bigg| \le S_{\max}\sqrt{\frac{\log(2/\delta)}{2n}} \quad \text{with probability at least } 1-\delta.
\]
The second term is purely deterministic and can be verified to satisfy
\[
\Bigg| \frac{2(n-1)\mu}{n\alpha - 1} - \frac{2 n \mu}{(n+1)\alpha - 1} \Bigg| = O\Big(\frac{1}{n}\Big).
\]
Combining these two bounds, we see that
\[
\Big|\overline{\rm Size}_n - \mathbb{E}[\rm Size(\hat{C}_n^{\alpha}(X_{\rm test}))]\Big| = O_P\Big(\frac{1}{\sqrt{n}}\Big),
\]
as claimed.

We now turn to case (i). The argument follows the strategy of \citet[Theorem 3.1]{gauthier2025backwardconformalprediction}. We use the same notations: for each calibration index $j=1,\dots,n$, define the random vector
\[
\mathbf{E}^j := \left(\frac{S(X_{j},y)}{\tfrac{1}{n}\big(\sum_{i \neq j} S(X_i,Y_i)+S(X_{j},y)\big)} \right)_{y \in \mathcal{Y}},
\]
and its population counterpart
\[
\tilde{\mathbf{E}}^j := \left(\frac{S(X_{j},y)}{\mu}\right)_{y \in \mathcal{Y}}.
\]
Similarly, for the test point we write
\[
\mathbf{E}^{\rm test} := \left(\frac{S(X_{\rm test},y)}{\tfrac{1}{n+1}\big(\sum_{i=1}^{n} S(X_i,Y_i)+S(X_{\rm test},y)\big)} \right)_{y \in \mathcal{Y}},
\qquad 
\tilde{\mathbf{E}}^{\rm test} := \left(\frac{S(X_{\rm test},y)}{\mu}\right)_{y \in \mathcal{Y}}.
\]
Define the function
\[
f : \mathbb{R}_+^{|\mathcal{Y}|} \to \mathbb{R}, 
\qquad 
(\mathbf{E}_y)_{y \in \mathcal{Y}} \mapsto \sum_{y \in \mathcal{Y}} \sigma\!\left(k\left(\frac{1}{\alpha}-\mathbf{E}_y\right)\right),
\]
so that the size can be written as $f(\mathbf{E}^j)$ in the leave-one-out case, and as $f(\mathbf{E}^{\rm test})$ at test time. First, observe that $f$ is Lipschitz-continuous for the $\ell_\infty$ norm. Indeed, for $\mathbf{E}^1, \mathbf{E}^2 \in \mathbb{R}_+^{|\mathcal{Y}|}$ we have
\[
\big|f(\mathbf{E}^1)-f(\mathbf{E}^2)\big|
= \Big|\sum_{y\in\mathcal Y}\big(\sigma(k(1/\alpha-\mathbf{E}^1_y))-\sigma(k(1/\alpha-\mathbf{E}^2_y))\big)\Big|
\le \sum_{y\in\mathcal Y}\big|\sigma(k(1/\alpha-\mathbf{E}^1_y))-\sigma(k(1/\alpha-\mathbf{E}^1_y))\big|.
\]
The slope of the sigmoid is uniformly bounded by $1/4$, hence $\sigma$ is $1/4$-Lipschitz. Therefore:
\[
|f(\mathbf{E}^1)-f(\mathbf{E}^2)|
\le \frac{k}{4}\sum_{y\in\mathcal Y}|\mathbf{E}^1_y-\mathbf{E}^2_y|
\le \frac{k|\mathcal{Y}|}{4}\|\mathbf{E}^1-\mathbf{E}^2\|_\infty.
\]
Thus $f$ is Lipschitz-continuous for the $\ell_\infty$-norm with Lipschitz constant $L:=k|\mathcal Y|/4$.

Now, we write
\begin{align*}
\overline{\rm Size}_n - \mathbb{E}[\rm Size(\hat{C}_n^{\alpha}(X_{\rm test}))]
&= \frac{1}{n}\sum_{j=1}^n f(\mathbf{E}^j) - \mathbb{E}[f(\mathbf{E}^{\rm test})] \\
&= \underbrace{\frac{1}{n}\sum_{j=1}^n \big(f(\mathbf{E}^j) - f(\tilde{\mathbf{E}}^j)\big)}_{=: \, T_1}
 + \underbrace{\frac{1}{n}\sum_{j=1}^n \big(f(\tilde{\mathbf{E}}^j) - \mathbb{E}[f(\tilde{\mathbf{E}}^{\rm test})]\big)}_{=: \, T_2}
 + \underbrace{\mathbb{E}[f(\tilde{\mathbf{E}}^{\rm test})] - \mathbb{E}[f(\mathbf{E}^{\rm test})]}_{=: \, T_3}.
\end{align*}
We now bound each term separately. Using the Lipschitz continuity of $f$ with constant $L$, we have
\begin{align*}
|T_1| &\le \frac{1}{n} \sum_{j=1}^n \left| f(\mathbf{E}^j) - f(\tilde{\mathbf{E}}^j) \right| 
\le \frac{L}{n} \sum_{j=1}^n \left\| \mathbf{E}^j - \tilde{\mathbf{E}}^j \right\|_\infty
\le L S_{\max} \frac{\sqrt{\frac{\log(2/\delta)}{2n}} + \frac{2}{n}}{\mu \left(S_{\min}/S_{\max}-\frac{1}{n}\right)}
\quad \text{with probability $\ge 1-\delta$},
\end{align*}
for all $\delta > 0$, where the last inequality follows from \citet{gauthier2025backwardconformalprediction}.
By Hoeffding's inequality,
\[
|T_2| \le \sqrt{\frac{\log(2/\delta)}{2n}} \quad \text{with probability $\ge 1-\delta$}.
\]
Finally, for the bias term $T_3$, we have
\begin{align*}
|T_3| &\le \mathbb{E}\left[ \left| f(\tilde{\mathbf{E}}^{\rm test}) - f(\mathbf{E}^{\rm test}) \right| \right]
\le L \, \mathbb{E}\left[ \left\| \tilde{\mathbf{E}}^{\rm test} - \mathbf{E}^{\rm test} \right\|_\infty \right]
\le \frac{2 L S_{\max}^2}{\mu S_{\min}} \frac{n+1}{n} \sqrt{\frac{\pi}{2(n+1)}},
\end{align*}
again using \citet{gauthier2025backwardconformalprediction}.
Combining the three terms with an union bound, we conclude that with probability $\ge 1-\delta$,
\[
\Big|\overline{\rm Size}_n - \mathbb{E}[\rm Size(\hat{C}_n^{\alpha}(X_{\rm test}))]\Big| \le L S_{\max} \frac{\sqrt{\frac{\log(4/\delta)}{2n}} + \frac{2}{n}}{\mu \left(S_{\min}/S_{\max}-\frac{1}{n}\right)} + \sqrt{\frac{\log(4/\delta)}{2n}} + \frac{2 L S_{\max}^2}{\mu S_{\min}} \frac{n+1}{n} \sqrt{\frac{\pi}{2(n+1)}}.
\]
This shows that
\[
\Big|\overline{\rm Size}_n - \mathbb{E}[\rm Size(\hat{C}_n^{\alpha}(X_{\rm test}))]\Big| = O_P\Big(\frac{1}{\sqrt{n}}\Big).
\]
\end{proof}

Our proof relies on standard concentration inequalities to control deviations of the leave-one-out scores from their expectation; in particular, we employ Hoeffding's inequality for its simplicity.  
In the regression case, we explicitly assume that $\alpha > 1/n$ so that the conformal set sizes are well-defined, and that the scores are bounded above by $S_{\max}$ to ensure bounded scores and enable the application of Hoeffding’s inequality. 
In the classification case, we additionally assume that the score is bounded below by a strictly positive value $S_{\min}>0$ to avoid division by zero when concentration inequalities are applied to denominators. 
Finally, the condition $n > S_{\max}/S_{\min}$ also ensures that denominators are strictly positive.

\section{PROOF OF PROPOSITION \ref{prop:monotone-size}}
\label{appendix:proof-monotone}

\begin{proposition}[Monotonicity of leave-one-out size under constant $\alpha$]
Define
\[
\overline{\rm Size}_n(\lambda) := \frac{1}{n}\sum_{j=1}^n \rm Size\big(\hat{C}_{n-1}^{\alpha^*(\lambda)}(X_j)\big),
\]
where 
\[
\alpha^*(\lambda) := \underset{\alpha \in (0,1)}{\rm argmin} \ \frac{1}{n}\sum_{j=1}^n \rm Size(\hat{C}_{n-1}^{\alpha}(X_j)) + \lambda \alpha,
\]
and assume that this minimizer exists.  

Assume moreover that the size function $\rm Size(\hat{C}_{n-1}^{\alpha}(X_j))$ is non-increasing in $\alpha$, which holds for any conformal set constructed using an e-value and threshold $1/\alpha$, in particular for the conformal set defined in (\ref{conformal_set}). 

Then $\overline{\rm Size}_n(\lambda)$ is non-decreasing in $\lambda$.
\end{proposition}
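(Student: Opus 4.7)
The claim is a standard monotone-comparative-statics argument: raising $\lambda$ pushes the penalty toward smaller $\alpha^*(\lambda)$, and since the per-point size is non-increasing in $\alpha$, the average size moves in the opposite direction. The plan is to make this precise by a four-step argument that avoids any differentiability or convexity assumption on $F(\alpha) := \tfrac{1}{n}\sum_{j=1}^n \rm Size(\hat{C}_{n-1}^{\alpha}(X_j))$; only the stated monotonicity in $\alpha$ is used.

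\textbf{Step 1 (Setup).} Fix $\lambda_1 < \lambda_2$ in $(0,\infty)$ and set $\alpha_i := \alpha^*(\lambda_i)$ for $i=1,2$ (existence is assumed). Write the objective as $G_\lambda(\alpha) = F(\alpha) + \lambda \alpha$.

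\textbf{Step 2 (Two optimality inequalities).} From $\alpha_1 \in \mathrm{argmin}\, G_{\lambda_1}$ and $\alpha_2 \in \mathrm{argmin}\, G_{\lambda_2}$, I get
\[
F(\alpha_1) + \lambda_1 \alpha_1 \le F(\alpha_2) + \lambda_1 \alpha_2, \qquad F(\alpha_2) + \lambda_2 \alpha_2 \le F(\alpha_1) + \lambda_2 \alpha_1.
\]

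\textbf{Step 3 (Monotonicity of $\alpha^*$).} Adding the two inequalities, the $F$ terms cancel and I obtain
\[
(\lambda_2 - \lambda_1)(\alpha_2 - \alpha_1) \le 0,
\]
and since $\lambda_2 - \lambda_1 > 0$, this forces $\alpha_2 \le \alpha_1$, i.e.\ $\alpha^*(\cdot)$ is non-increasing in $\lambda$.

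\textbf{Step 4 (Transfer to size).} By assumption each $\alpha \mapsto \rm Size(\hat{C}_{n-1}^{\alpha}(X_j))$ is non-increasing, so $F$ is non-increasing. Combined with $\alpha^*(\lambda_2) \le \alpha^*(\lambda_1)$ from Step 3, this yields
\[
\overline{\rm Size}_n(\lambda_2) = F(\alpha^*(\lambda_2)) \ge F(\alpha^*(\lambda_1)) = \overline{\rm Size}_n(\lambda_1),
\]
which is the desired monotonicity. The e-value justification for the monotonicity of $\rm Size(\hat{C}_{n-1}^{\alpha}(X_j))$ in $\alpha$ is just that the set in (\ref{conformal_set}) is $\{y : E(y) < 1/\alpha\}$, and $1/\alpha$ is decreasing in $\alpha$, so the sublevel sets shrink.

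\textbf{Main obstacle.} There is no serious analytic hurdle here; the subtle point is conceptual. One should emphasize that the argument is purely ordinal --- it does not require $F$ to be continuous, differentiable, or convex, nor does it require $\alpha^*(\lambda)$ to be unique. If the argmin is a set, the same two inequalities hold for any selection, so the conclusion is unaffected. The only place a reader could stumble is checking that $F$ inherits monotonicity from its summands, and that the displayed e-value sublevel set is indeed non-increasing in $\alpha$, both of which are immediate.
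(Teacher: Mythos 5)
Your proof is correct and follows exactly the same route as the paper's: the two optimality inequalities at $\lambda_1$ and $\lambda_2$, summing to get $(\lambda_2-\lambda_1)(\alpha_2-\alpha_1)\le 0$ hence $\alpha^*$ non-increasing, then transferring via the monotonicity of $\rm Size$ in $\alpha$. Your added remarks (the argument is purely ordinal and works for any measurable selection from a non-unique argmin) are accurate but not part of the paper's proof.
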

\begin{proof}
Let $0 < \lambda_1 < \lambda_2$, and denote the corresponding minimizers by $\alpha_1 := \alpha^*(\lambda_1)$ and $\alpha_2 := \alpha^*(\lambda_2)$. By optimality:
\[
\frac{1}{n}\sum_{j=1}^n \mathrm{Size}(\hat{C}_{n-1}^{\alpha_1}(X_j)) + \lambda_1 \alpha_1 
\le \frac{1}{n}\sum_{j=1}^n \mathrm{Size}(\hat{C}_{n-1}^{\alpha_2}(X_j)) + \lambda_1 \alpha_2,
\]
\[
\frac{1}{n}\sum_{j=1}^n \mathrm{Size}(\hat{C}_{n-1}^{\alpha_2}(X_j)) + \lambda_2 \alpha_2 
\le \frac{1}{n}\sum_{j=1}^n \mathrm{Size}(\hat{C}_{n-1}^{\alpha_1}(X_j)) + \lambda_2 \alpha_1.
\]
Adding these two inequalities yields
\[
(\lambda_2 - \lambda_1)(\alpha_2 - \alpha_1) \le 0 \quad \Rightarrow \quad \alpha_2 \le \alpha_1.
\]
Since $\rm Size(\hat{C}_{n-1}^{\alpha}(X_j))$ is non-increasing in $\alpha$, it follows that
\[
\overline{\rm Size}_n(\lambda_2) = \frac{1}{n}\sum_{j=1}^n \rm Size(\hat{C}_{n-1}^{\alpha_2}(X_j)) \ge 
\frac{1}{n}\sum_{j=1}^n \rm Size(\hat{C}_{n-1}^{\alpha_1}(X_j)) = \overline{\rm Size}_n(\lambda_1),
\]
which proves the claim.
\end{proof}

\section{ADDITIONAL EXPERIMENTS}
\label{appendix:reg}

We provide additional experiments in the regression setting to further demonstrate the empirical validity of our method and of Theorem \ref{thm:loo-size-consistency}. In this setup, we generate a synthetic regression dataset consisting of 100 training samples and 100 calibration samples. Each feature $X_i$ is drawn independently from a uniform distribution on~$[-5, 5]$, and the corresponding label is generated as
\[
Y_i = 2X_i + \varepsilon_i, \quad \varepsilon_i \sim \mathcal{N}(0,1).
\]
The predictor $f$ is taken to be a standard linear regression fit on the training data.

\begin{figure}[h!]
    \centering
    \includegraphics[width=.4\textwidth]{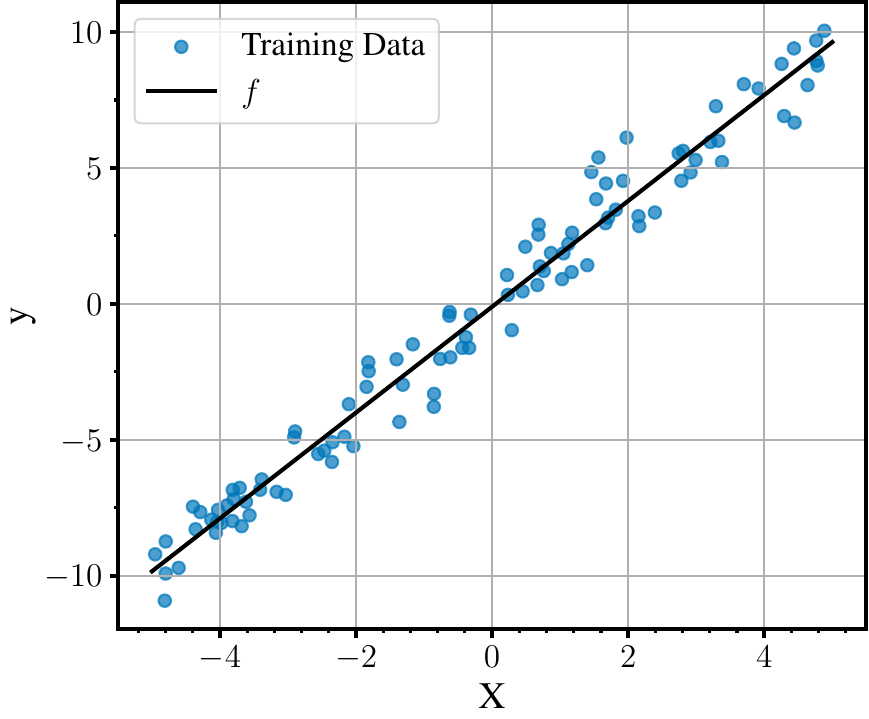}
    \caption{Visualization of the synthetic regression dataset.}
    \label{fig:reg_data}
\end{figure}

For this experiment, we reuse essentially the same neural network architecture for $\tilde{\alpha}_\theta$ as in the main experiment described in the paper: a fully connected feedforward network with one hidden layer of 32 units and ReLU activation, followed by a single output neuron with sigmoid activation. The main difference is that here we initialize the network deliberately so that the output is initially close to 1. This ensures that $\tilde{\alpha}_\theta$ does not fall below $1/n$, which could otherwise lead to ill-defined conformal set sizes.

Training is performed with the Adam optimizer at a learning rate of $10^{-3}$ and batch size 32 over 200 epochs. The network inputs are the leave-one-out score for the left-out calibration point and the sum of the remaining scores. We plot the training dynamics for $\lambda \in \{10, 20, 50\}$ in Figure~\ref{fig:reg_training}. The curves converge smoothly across all runs, demonstrating that the leave-one-out procedure allows the network to effectively train.

\begin{figure}[h!]
    \centering
    \includegraphics[width=\textwidth]{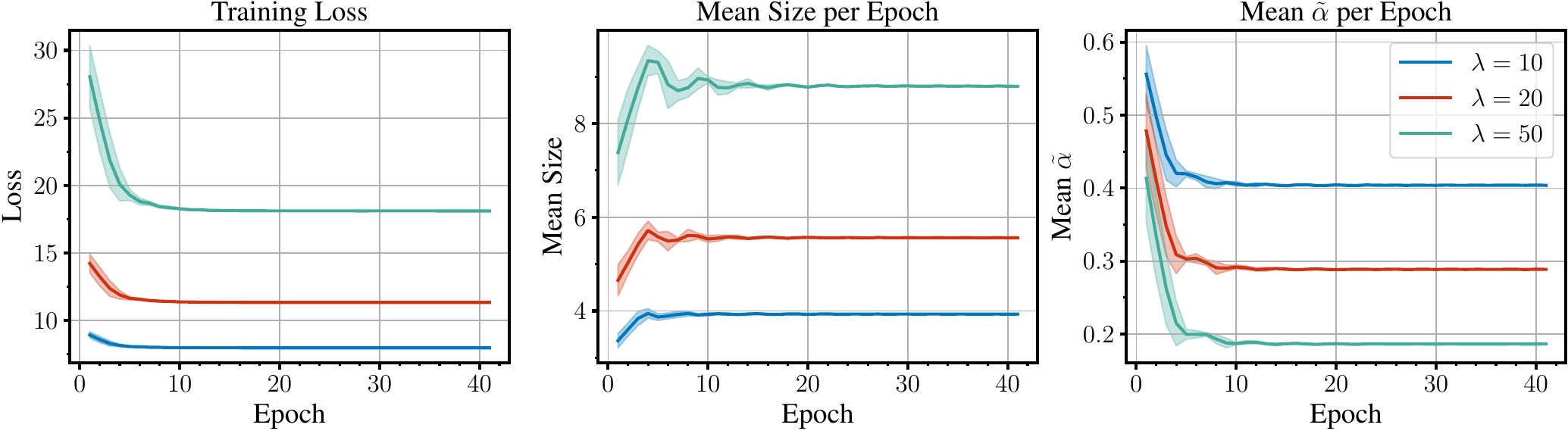}
    \caption{Training curves for $\lambda \in \{5,10,50\}$, averaged over 5 runs and smoothed with a moving average of size 10 for clarity. Shaded regions show $\pm1$ standard deviation across runs. \textbf{Left:} training loss. \textbf{Center:} mean set size. \textbf{Right:} mean adaptive $\tilde\alpha$.}
    \label{fig:reg_training}
\end{figure}

Once the model is trained, we can use it to produce conformal sets. As in the main experiment of the paper, we sample 100 test points. Here, we focus on the validity of Theorem \ref{thm:loo-size-consistency} in the regression setting. We consider a model trained with $\lambda = 50$. The leave-one-out estimator $\overline{\rm Size}_n$ is equal to 8.82, while the expected size of the test conformal set, computed from the 100 test conformal sets, is 9.31. The estimator is therefore relatively close to the true expected size. The gap between the two stems from the relatively small value of $n$ in our experiment. Empirically, we observe that this difference shrinks as $n$ increases, illustrating the effectiveness of the estimator.

Finally, we note an important practical limitation of this specific regression setup. As derived in Remark \ref{rmk:sets_reg}, the conformal set size for the MAE score using the soft-rank e-variable does not depend on the test feature~$X_{\rm test}$. Consequently, the resulting prediction sets maintain a constant width across the feature space and cannot adapt to varying levels of uncertainty or noise in the data. While this makes the sets overly large and rigid for practical, real-world regression tasks, this setup serves as a clean, differentiable testbed to empirically validate our procedure. For practical regression applications, combining our adaptive coverage framework with locally adaptive non-conformity scores or alternative e-variable constructions would be more appropriate.

\section{COMPLEXITY ANALYSIS AND RESOURCE DETAILS}

In our complexity analysis, we assume that the base model $f$ is a frozen black box. Therefore, the non-conformity scores $S(X_i, y)$ for all calibration points and all classes can be pre-computed prior to training. Once these scores are cached, querying $S$ inside the training loop becomes a constant-time memory lookup. Under this pre-computation, in the classification setting with the loss (\ref{eq:smooth_sigmoid}), the time complexity of the optimization loop in Algorithm \ref{algorithm} is
\[
O\!\left(T \cdot B \cdot (C_{\text{NN}} + n + |\mathcal{Y}|)\right),
\] 
where $C_{\text{NN}}$ is the cost of a forward pass through the network $\tilde{\alpha}_\theta$, $B$ is the batch size, $T$ is the number of training epochs, and $|\mathcal{Y}|$ is the number of classes. 
In the regression setting with the loss \eqref{eq:size_reg}, the time complexity reduces to 
\[
O\!\left(T \cdot B \cdot (C_{\text{NN}} + n)\right),
\] 
since conformal set sizes can be computed in constant time.

Algorithm \ref{algorithm_lambda} consists of a bracketing phase followed by bisection. In the bracketing phase, $\lambda$ is repeatedly doubled or halved until a bracket around the target mean set size $M$ is found, requiring $O(\log R)$ iterations if the initial $\lambda$ differs by a factor of $R$ from the solution. In the bisection phase, the interval is halved until the mean size is within a tolerance $\varepsilon$, yielding $O(\log(1/\varepsilon))$ iterations. Each iteration involves training the network for $T$ epochs on minibatches of size $B$ using Algorithm \ref{algorithm}, giving an overall complexity of:
\[
O\!\left( (\log R + \log(1/\varepsilon)) \cdot T \cdot B \cdot (C_{\text{NN}} + n + |\mathcal{Y}|)\right),
\]

All experiments were run on a machine with a 13th Gen Intel\textsuperscript{\textregistered} Core\texttrademark{} i7-13700H CPU. 
Training times ranged from a few seconds in the regression setting to a few minutes in the classification setting on CIFAR-10.

To address the scalability of our approach to domains substantially larger than CIFAR-10 (such as ImageNet), we note that Algorithm \ref{algorithm} operates exclusively on the pre-computed non-conformity scores. It is entirely independent of the raw input dimensionality (e.g., image resolution). As established above, the training complexity scales linearly with the calibration set size $n$ and the number of classes $|\mathcal{Y}|$. Even for massive datasets like ImageNet, the required calibration set size $n$ typically remains small and manageable (e.g., $n \approx 1000$). Thus, for a large-scale task ($n=1000$, $|\mathcal{Y}|=1000$), the neural coverage policy $\tilde{\alpha}_\theta$ is simply trained over low-dimensional score vectors (of size 1001). The linear scaling in $n$ and $|\mathcal{Y}|$: adds negligible computational overhead. The total wall-clock time is therefore overwhelmingly dominated by the initial forward passes of the base predictor $f$ needed to cache the scores, ensuring our adaptive conformal wrapper remains highly scalable to large real-world tasks on standard hardware.

\end{document}